\DeclarePairedDelimiter{\ceil}{\lceil}{\rceil}
\newcommand{\Z}{\mathbb{Z}} 
\newcommand{\R}{\mathbb{R}} 
\newcommand{\N}{\mathbb{N}} 
\newcommand{\cat}{{}^\frown}
\newcommand{\Pgrid}{\mathcal{P}_\textrm{grid}}
\newcommand{\Pglobal}{\mathcal{P}_\textrm{global}}
\newcommand{\CCC}{\mathcal{C}^\circ(\R^2)}
\def\Nat{{\mathbb{N}}}
\def\Z{{\mathbb{Z}}}
\def\histu{{\tilde{u}}}
\def\P{{\cal P}}
\def\atan2{\operatorname{atan2}}
\acrodef{Ispace}[I-space]{\emph{information space}}
\acrodef{Istate}[I-state]{\emph{information state}}
\acrodef{Imap}[I-map]{\emph{information mapping}}
\acrodef{ITS}[ITS]{\emph{information transition system}}
\acrodef{ITSs}[ITSs]{\emph{information transition systems}}
\acrodef{DITS}[DITS]{\emph{deterministic information transition system}}
\acrodef{NITS}[NITS]{\emph{nondeterministic information transition system}}
\acrodef{POMDPs}[POMDPs]{\emph{partially observable Markov decision processes}}
\acrodef{PSRs}[PSRs]{\emph{predictive state representations}}
\begin{document}
\mainmatter              
\title{Universal Plans:\\ One Action Sequence to Solve Them All!}
\titlerunning{Universal Plans}  
%
\author{Kalle G. Timperi, Alexander J. LaValle, Steven M. LaValle%
\thanks{This work was supported by a European Research Council (ERC AdG, ILLUSIVE: Foundations of Perception Engineering, 101020977) and Academy of Finland (PERCEPT 322637)  {\tt\small (e-mail: firstname.lastname@oulu.fi).}}%
}
\institute{Center for Ubiquitous Computing \\
Faculty of Information Technology and Electrical Engineering \\
University of Oulu, Finland}

\maketitle              

\begin{abstract}
This paper introduces the notion of a universal plan, which when executed, is guaranteed to solve all planning problems in a category, regardless of the obstacles, initial state, and goal set.  Such plans are specified as a deterministic sequence of actions that are blindly applied without any sensor feedback.  Thus, they can be considered as pure exploration in a reinforcement learning context, and we show that with basic memory requirements, they even yield optimal plans.  Building upon results in number theory and theory of automata, we provide universal plans both for discrete and continuous (motion) planning and prove their (semi)completeness.  The concepts are applied and illustrated through simulation studies, and several directions for future research are sketched.

\keywords{planning algorithms, motion planning, discrete planning, normal numbers, maze searching, graph exploration}
\end{abstract}
%

\section{Introduction}

A planning algorithm typically takes as input a robot movement model, environment model (obstacles), initial state, and goal state (or states), and must compute a plan that would bring the robot from the initial to the goal while avoiding obstacles.  Suppose we are in a discrete-time, predictable setting, in which case a plan is a sequence of actions.  Consider varying the initial and goal states.  We naturally expect the planning algorithm to produce different action sequences for different planning instances, and report failure if it is impossible.

What would happen if we demand that a single action sequence must work for a set of several instances?  Clearly, a plan could easily fail by changing the initial state, see Figure~\ref{fig:finite_grid_plans}.  However, we could make plans more robust by assuming an action `does nothing' if the robot is about to step into an obstacle.   Under this setting, we raise the seemingly absurd question:  Can a single action sequence solve {\em all} planning instances?  In this paper, we introduce infinite action sequences, each of which solves all solvable planning instances under the assumption that only the movement model is fixed.  The obstacles, initial, and goal are allowed to vary.  We call such an action sequence that solves all problems a {\em universal plan}.  The existence of a universal plan implies that a planning algorithm could be trivialized by merely ignoring the input model and reporting the same action sequence every time! Even stranger, we provide conditions under which a universal plan discovers in finite time an optimal trajectory for any planning instance to which
it is applied.

\begin{figure}[tb]
\begin{tabular}{cc}
\includegraphics[width=6cm]{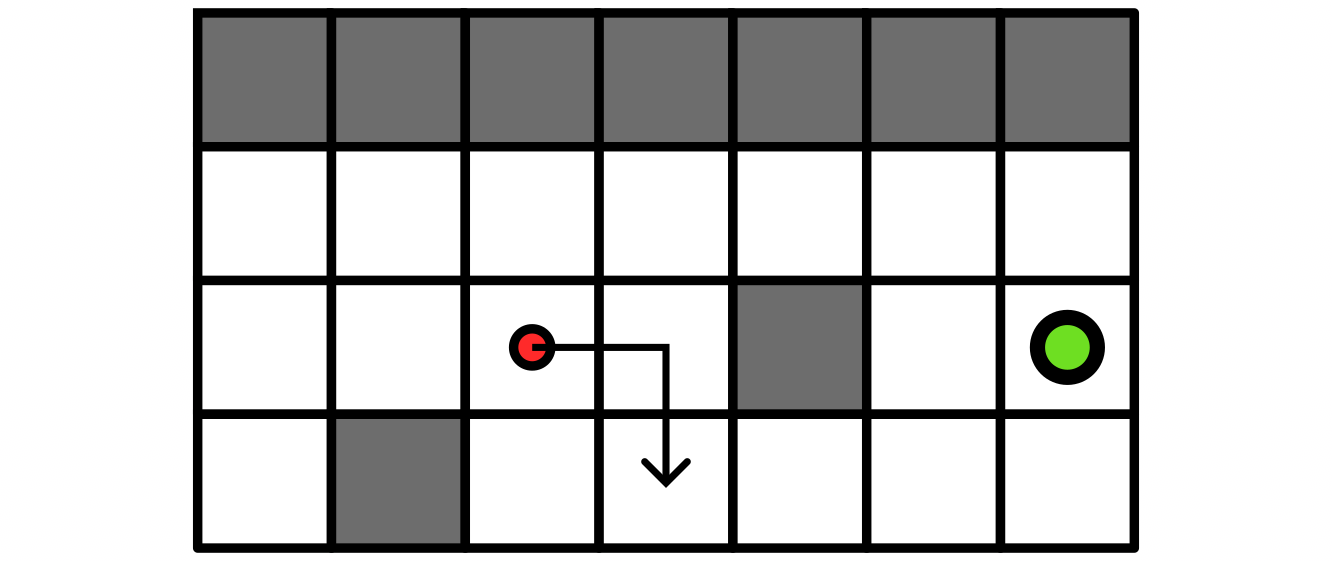} &
\includegraphics[width=6cm]{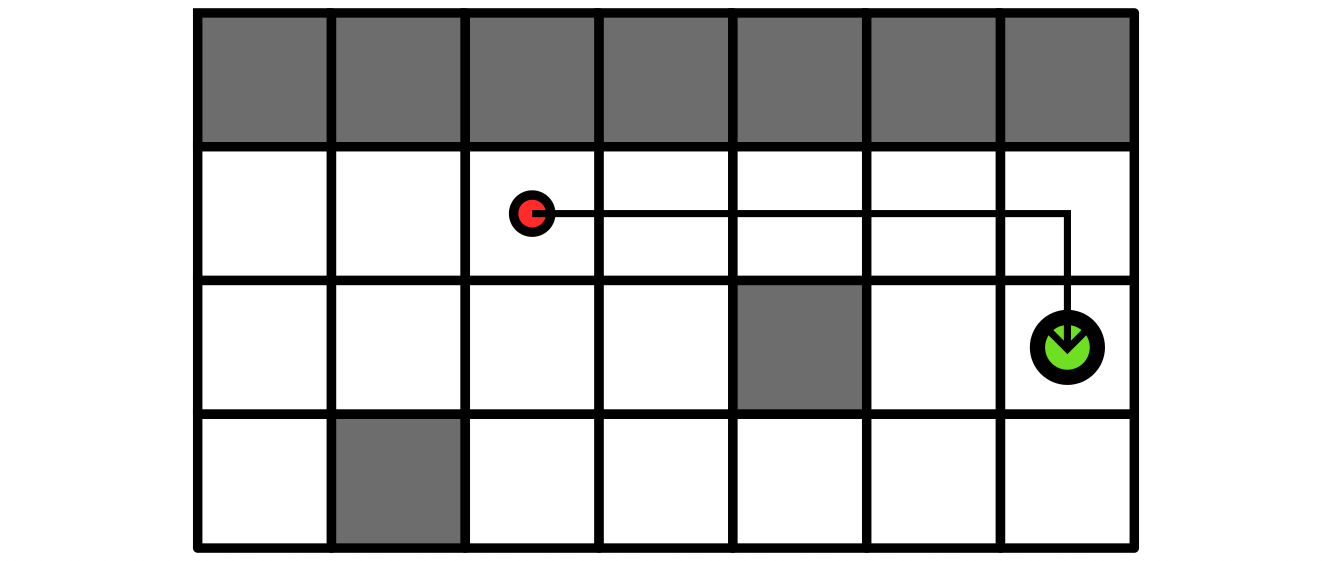} \\
a. & b.
\end{tabular}
\caption{\label{fig:finite_grid_plans} Applying a finite action sequence $(\rightarrow, \rightarrow, \rightarrow, \rightarrow, \downarrow)$ in a grid environment (white cells are free space, grey cells are obstacles, and \includegraphics[width=0.2cm]{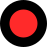} is the start state). (a) The robot tries to move four times to the right, but is kept still by the obstacle, then moves down. (b) The robot is not obstructed, and actually reaches the goal \includegraphics[width=0.3cm]{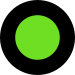}.
} 
\end{figure}

Random walks, and randomization in general, are related; however, we want deterministic guarantees that the problem will be solved.  This is significantly harder to ensure.  Thus, we want to explore the completeness of such a solution, without resorting to the weaker notion of probabilistic completeness.  Furthermore, unless an external physical process is used for entropy production \cite{HuLiaWonZho09,TurBarKelMcKBaiBoy18}, so-called `random' sequences are merely produced by deterministic pseudorandom number generators in practice.  Realizing this, why not consider other deterministic sequences, some of which may be more suitable for planning? 

Why should we care about universal plans?  At the very least, they improve our understanding of planning algorithms, especially the notions of completeness and asymptotic optimality, which have been argued for decades as important, desirable properties of a planning algorithm.  This study also helps to clarify the role of randomization in planning \cite{AleKarLipLovRac79,BarKavLatLiMotRag96,BarLat90,Erd92}; in light of our work, it seems that the {\em complexity} of a random sequence is more important than its probabilistic properties~\cite{Kal94}.  Thus, we can focus more on optimizing sequences that have sufficient complexity, rather than worry about their probabilistic properties or interpretations.  

Another reason to study universal plans is for the exploration component of ``exploration vs.~exploitation" for machine learning applied to robot systems.  Reinforcement learning requires a sufficiently rich collection of state-action-state triples.  A universal plan could be an ideal exploration strategy, which is ultimately optimized for a policy that achieves lower costs (higher rewards) via exploiting the parts already visited.

\section{Universal Plans}\label{sec:up}
The question of the existence of universal plans can be approached through the following two interrelated questions:
\begin{itemize}
    \item[{Q1:}] For an environment $X$ and a goal state $x_G \in X$, can we characterize the set of finite plans that take the robot from any initial state to the goal state?
    \item[{Q2:}] Is there a deterministic process that produces an infinite action sequence $\histu$ that is guaranteed to take the robot from any initial state $x_I$ in any environment $X$ to any goal state $x_G \in X$?
\end{itemize}

For Q1, asymptotic upper and lower bounds for the length of universal sequences have been established in the context of graph exploration~\cite{AleKarLipLovRac79,BarBorKarLinWer89,Bri87}. It is also meaningful to ask, given the set of all such plans for a given environment $X$, what other environments $X'$ share the same set of plans? In other words, in what sense do sets of plans characterize the environments for which they succeed? Our main focus is to address question Q2 by studying plans generated from rich or normal number sequences.

\subsection{Normal and Rich Sequences}
In his seminal 1909 paper~\cite{Bor09}, Borel defined \emph{normal numbers}~\cite{BaiCra02,Bug12,KuiNie74}, whose decimal expansion satisfies a certain uniformity property: a real numbers $x$ is \emph{normal} in base $b$, if every length-$k$ sequence of symbols $0, 1, \ldots, b-1$ occurs in the base-$b$ expansion of $x$ with the asymptotic frequency $b^{-k}$. We also say that such a decimal expansion is a \emph{normal sequence}. Every normal number is irrational, and it is known that Lebesgue-almost all reals are in fact normal~\cite{Bor09}. Furthermore, almost every `truly random' (Chaitin random) number is normal~\cite{Cal94,LiVit19}. However, despite a century of efforts, basic questions about normal numbers remain unanswered. It is not known, for example, whether household irrationals such as $\pi$, $\sqrt{2}$, and $\ln 2$ are normal in any base (they all are conjectured to be normal). An excellent survey on the computation of the digits of $\pi$ and considerations regarding its normality is provided in~\cite{BaiBor16}. The construction of concrete examples of normal numbers is likewise challenging~\cite{Cha33,CopErd46,Sto76}. However, there are algorithms for computing normal numbers in nearly linear time~\cite{LutMay21}.
The earliest example of a normal number is Champernowne's number, obtained by concatenating the decimal expressions of all natural numbers into an infinite sequence~\cite{Cha33}.

A related notion is that of a \emph{rich number}, for which the decimal expansion is a \emph{rich} (or \emph{disjunctive}) \emph{sequence}. This means that every finite word in the alphabet $\{0, 1, \ldots, b-1\}$ occurs in the decimal expansion at least once (and thus infinitely often)~\cite{Com83}. Hence, every normal sequence is rich, but the converse is not generally true. The study of the properties of rich and normal numbers connects such separate fields as number theory~\cite{BerDowVan22}, Kolmogorov complexity~\cite{LiVit19,Tak13}, automata theory~\cite{BecHei13} and mathematical logic~\cite{Com83,Her96}.

From the perspective of universality, richness
guarantees the sampling of all possible finite action sequences, when individual symbols/numerals are interpreted as robot actions. A pivotal aspect of this approach turns out to be the way in which richness or normality of a sequence is preserved when the sequence is further split into disjoint subsequences. This is a delicate problem in its own right, although many results are known~\cite{BerDowVan22,BerVan19,Kam73,Van17}. In our construction of a universal scale-free plan in Section~\ref{sec:scale} we rely on so-called prefix selection rules to guarantee the normality of subsequences. We also implemented  a dynamic (non-oblivious) selection rule, which outperformed the prefix selection rule in our numerical experiments. Proving its universality is a topic for future work. 


%
What exactly is the difference, then, between random sampling versus following a normal number sequence? In a randomly generated plan, a fixed, finite action sequence $\histu$ can appear at any given time with positive probability. In contrast, if the actions of the robot are obtained from the decimals of a rich or normal number, $\histu$ will only show up at specific, predetermined points in the decimal expansion. This calls for different, non-probabilistic proof techniques which, we argue, shed new light also on the existing probabilistic approaches.


\subsection{Basic concepts and notation}
Following notation in \cite{Lav06}, consider a discrete feasible planning problem expressed in terms of: 1) a nonempty {\em state space} $X$, 2) a nonempty {\em action space} $U(x)$ for every $x \in X$, 3) a {\em state transition function} $f$ that yields some $f(x,u) \in X$ for every $x \in X$ and $u \in U(x)$, 4) an {\em initial state} $x_I \in X$, and 5) a nonempty {\em goal set} $X_G \subset X$.  

A {\em plan} is a sequence of actions $\histu = (u_1,\ldots,u_K)$, which are designed to be applied in succession from stage $1$ to stage $K$.  Initially, $x_1 = x_I$.  Then, $u_1 \in U(x_1)$ is applied to obtain $x_2 = f(x_1,u_1)$. This process repeats until $x_{K+1} = f(x_K,u_K)$ is reached.  A {\em successful} plan is one in which $x_{K+1} \in X_G$. When indicating the final state after executing a sequence of actions $\histu$, we sometimes use the notation
$x \cat \histu := f(\cdots f(f(x, u_1), u_2) \cdots, u_K) = x_{K+1}$.

It will be convenient to consider plans that do not have a fixed length $K$.  Instead, an {\em infinite plan} $\histu = (u_1,u_2,\ldots)$ may be defined. This may be written as a mapping $\histu: \Nat \rightarrow U$, in which $\Nat = \{1,2,\ldots\}$ is the set of natural numbers and $U = \bigcup_{x \in X} U(x)$ is the union of all possible state-dependent actions.\footnote{Each action $u_k$ is chosen from the appropriate set $U(x)$, where $x$ is the state that is reached from $x_I$ by applying the action sequence $(u_1, \ldots, u_{k-1})$.} For an infinite plan to be successful, we merely require that there exists a stage $k \in \Nat$ for which $x_k \in X_G$.  During execution, the robot may have a {\em goal detector}, signalling if $x_k \in X_G$, thereby terminating the plan (rather than running forever).

Let $P = (X,U,f,x_I,X_G)$ refer to a {\em planning instance}, which would ordinarily correspond to the input to a planning algorithm.  Let $\P$ refer to a set of planning instances.  A plan $\histu$ is called {\em universal} with respect to a set $\P$ if it is a successful plan for all $P \in \P$.
Thus, our quest is to find fixed plans that are successful for the largest possible $\P$.

Consider some possibilities for $\P$.  Let $\P(X,U,f)$ be the set of all possible planning instances given $X$, $U$, and $f$.  This could correspond to a fixed set of obstacles, but any $x_I \in X$ and nonempty $X_G \subset X$ are allowed.  Let $\P(X,U)$ denote the set of all planning instances by allowing any valid $f$, $x_I$, and $X_G$.  Let $\P(X,U,F)$ denote the set of all planning instances generated by allowing any $f \in F$, in which $F$ is a set of state transition functions. Section \ref{sec:maze} will define $F$ to correspond to the set of all finite, 2D grid-based planning problems.

\section{Universal Plans for Grid Search Problems}\label{sec:maze}
Maze search is a classical problem in the intersection of robotics and theoretical computer science~\cite{BluKoz78,Bud77,Bud78,CohFraIlcKorPel08,LavBraLin04}. It can be viewed as a restricted case of the more general graph search problem~\cite{DikFraKraPel04,FraIlcPeePelPel05}, applied to finite, connected planar graphs with maximum degree four.
Two complementary interpretations of the problem have been explored in the literature, namely, (i) constructing a representation of an unknown graph environment from sensory feedback~\cite{RivSch93,RivSch94}, and (ii) identifying
properties of the robot-environment coupling that ensure a priori the complete exploration of the graph even in the case of minimal sensing~\cite{BluKoz78,CohFraIlcKorPel08}. We explore the limits of the latter 
viewpoint when sensing 
is removed completely.

\subsection{The Robot Grid Search Problem}
In this section, we define the discrete robot grid search planning problem and establish the existence of universal plans for such problems. 

\begin{definition}[\textbf{Connected Grid}] \label{Def_Connected_Grid}
A subset $Z \subset \Z \times \Z$ is \emph{connected}, if for any two points $z_1,z_2 \in Z$ there exists some $n$-tuple $(x_1, \ldots, x_n) \subset Z$ for which $x_1 = z_1$, $x_n = z_2$ and $\|x_{k+1} - x_k\| = 1$ for all $k = 1, \ldots, n-1$.
\end{definition}

We now formally define the robot grid search problem. The actions available to the robot, $\{\texttt{left, right, up, down}\}$ are identified with the corresponding
unit-length vectors $\{v_\textrm{l}, v_\textrm{r}, v_\textrm{u}, v_\textrm{d}\} := \{(-1,0), (1,0), (0,1), (0,-1)\}$.

\begin{definition}[\textbf{Robot grid search problem}] \label{Def_robot_grid_search_problem}
A \emph{robot grid search problem} is a planning problem $P_X = (X, U, f, x_I, x_G)$, in which $X \subset \mathbb{Z} \times \mathbb{Z}$ is a finite, connected environment, $U = \{v_\textrm{l}, v_\textrm{r}, v_\textrm{u}, v_\textrm{d}\}$, the initial and goal states are $x_I, x_G \in X$, respectively, and the transition function is given by
\begin{equation} \label{Eq_robot_movement_model}
f(x, u) = \begin{cases}
    x + u, & \textrm{if} \,\,\,  x + u \in X, \\
    x, & \textrm{if} \,\,\, x + u \notin X.
\end{cases}
\end{equation}
We denote by $\Pgrid (\Z \times \Z)$ the collection of all robot grid search problems.
\end{definition}
The interpretation of~\eqref{Eq_robot_movement_model} is that if the robot is trying to move through the environment boundary, it will stay put. Note that the robot is assumed to be sensorless -- it will not know whether it moved after trying to execute an action.

When using a rich number $\alpha = \alpha_1\alpha_2\ldots$ to generate action sequences, we need to convert the digits $\alpha_k$ to corresponding actions via a suitable mapping.

\begin{definition}[\textbf{Rich and normal plans}] \label{Def_rich_and_normal_plans}
Let $P \in \Pgrid (\Z \times \Z)$ be a robot grid search problem with actions space $U$, let $\alpha = \alpha_1\alpha_2\ldots$ be a rich/normal number in base $b$, and let $c: \mathcal{B} \to U$, where $\mathcal{B} = \{0, \ldots, b-1\}$. Then the infinite sequence $\big(c(\alpha_n)\big)_{n=1}^\infty$ is a \emph{rich/normal plan (based on $\alpha$)}.
\end{definition}

We begin by noting that, formally, our definition of a robot grid search problem coincides with that of a \emph{deterministic finite automaton (DFA)}.
\begin{definition}[\textbf{Deterministic Finite Automaton}] \label{Def_DFA}
A \emph{deterministic finite automaton (DFA)} is a $5$-tuple $(Q, \Sigma, \delta, q_0, F)$, where $Q$ is a finite set of states, $\Sigma$ is a finite set called \emph{alphabet}, $\delta$ is the \emph{transition} function, $q_0 \in Q$ is the start state, and $F \subset Q$ is the set of \emph{accept states}.
\end{definition}
By identifying the set of automaton states $Q$ with the environment (maze or graph) $X$, the set of inputs $\Sigma$ with the set $U$ of robot actions, and the set of accept states $F$ with the set $X_G$ of goal states, we may interpret search problems for finite mazes and graphs as DFA exploration problems.

We begin by introducing the concept of an \emph{essential class}. The definition is equivalent to~\cite[Definition 4.1]{Com83}.

\begin{definition}[\textbf{Reachable state; Essential class}]
Let $P_X$ be a planning problem and $x, x' \in X$. Then $x'$ is \emph{reachable} from state $x$ in $P_X$, if there is an action sequence $\histu$ such that $f(x,\histu) = x'$. A maximal set $S \subset X$ of states all reachable from each other in $X$ is called an \emph{essential class} of $P_X$.
\end{definition}

An essential class thus corresponds to a strongly connected planning problem (subautomaton) $P_S = (S, U, f)$, meaning that every state is reachable from every other state in $S$. Since each robot grid search problem $P_X$ is by definition strongly connected (the set $X$ is assumed to be a connected grid), the only essential class for the planning problem is the state space $S=X$ itself.

Generally, for any $P_X \in \Pgrid (\Z \times \Z)$ and any $x \in X$, there exists some action sequence $\histu$ for which the state $x\cat \histu$ belongs to some essential class~\cite[Proposition 4.2]{Com83}. 
From this we obtain the following result (\cite[Lemma 4.3]{Com83}):

\begin{proposition}[\textbf{Rich plans converge to essential classes (Compton)}] \label{Prop_rich_plans_go_to_essential}
Let $P_X = (X, U, f, x_I)$ be a planning problem with a finite state space $X$, and let $\big(c(\alpha_n)\big)_{n=1}^\infty$ be a rich plan corresponding to some rich number $\alpha$. Then there exists some essential class $S$ and $N \in \N$, for which $x_I \cat \big(c(\alpha_1), \ldots, c(\alpha_n) \big) \in S$ for all $n \geq N$, and the robot will visit every $x \in S$ infinitely many times.
\end{proposition}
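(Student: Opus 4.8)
The plan is to combine the two cited facts — that rich plans eventually enter and stay in an essential class, and that from any state some action sequence reaches an essential class — with the finiteness of $X$ and the defining property of richness (every finite word over $\mathcal{B}$ appears, hence infinitely often). First I would fix the rich number $\alpha$ and the associated rich plan $\big(c(\alpha_n)\big)_{n=1}^\infty$, and write $x_n := x_I \cat (c(\alpha_1),\ldots,c(\alpha_{n-1}))$ for the state after $n-1$ steps, so $x_1 = x_I$. The sequence $(x_n)$ lives in the finite set $X$, and for each $n$ the transition from $x_n$ to $x_{n+1}$ is governed by $f$ with action $c(\alpha_n)$.

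The first substantive step is to argue that the trajectory must eventually land in some essential class and never leave it. Since $X$ is finite, there is at least one state $x^*$ visited infinitely often. By the cited Proposition~4.2 of~\cite{Com83}, from $x^*$ there is a finite action sequence $\histw$ with $x^* \cat \histw$ in an essential class $S$; let $\ell$ be the length of $\histw$ and let $w \in \mathcal{B}^\ell$ be a preimage of $\histw$ under $c$ (extending $c$ letterwise to words). Because $\alpha$ is rich, the word $w$ occurs in $\alpha$ at infinitely many positions; among these, since $x^*$ is visited infinitely often and the occurrences of $w$ are cofinally frequent, one can pick an occurrence of $w$ that starts exactly at a time when the robot is at $x^*$. (This pairing is the only mildly delicate point: one must use that both events — being at $x^*$, and $w$ starting — happen infinitely often, and combine them; a clean way is to observe that the set of times the robot is at $x^*$ is infinite, so the word $w$ appearing infinitely often in $\alpha$ must, at some sufficiently late occurrence, begin at such a time, which is immediate once one notes that the state at time $n$ depends only on $\alpha_1\cdots\alpha_{n-1}$ and richness lets us prescribe the next $\ell$ digits at infinitely many starting points $n$ — more carefully, one appeals directly to Lemma~4.3 of~\cite{Com83}, which already packages exactly this reasoning.) Hence some $x_m \in S$. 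Once in $S$, the essential class is absorbing in the sense relevant here: every state reachable from a state of $S$ is again reachable back into $S$; in fact for a strongly connected $P_X$ — which is our case, since a connected grid has $S = X$ — the robot simply never leaves $X$, so this step is trivial in the grid setting and we may take $N = 1$, $S = X$. I would state the general finite-$X$ version citing~\cite[Lemma 4.3]{Com83} and then specialize.

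The second step is to upgrade "enters $S$" to "visits every $x \in S$ infinitely often." Fix any target $y \in S$. Since $S$ is an essential class, $y$ is reachable from every state of $S$, so in particular from the (finitely many) states of $S$ there are action sequences into $y$; let $\histv_x$ be such a sequence from $x$ to $y$ and let $L = \max_{x \in S}|\histv_x|$. Now suppose the robot is at some $x_n \in S$ with $n \ge N$. Richness of $\alpha$ guarantees that a preimage word $v_{x_n} \in \mathcal{B}^{\le L}$ of $\histv_{x_n}$ occurs in $\alpha$ infinitely often, hence occurs at some position $\ge n$; applying the same "prescribe the next digits" argument as above, the robot reaches $y$ at some time $\ge n$. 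Since $n \ge N$ was arbitrary and the robot stays in $S$ forever after time $N$, this produces arbitrarily late visits to $y$, i.e., infinitely many visits. As $y \in S$ was arbitrary, the conclusion follows. Finally I would remark that in the grid case, where $S = X$, the statement reads: the rich plan drives the robot through every cell of $X$ infinitely often, regardless of $x_I$.

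I expect the main obstacle to be the bookkeeping in the "prescribe the next digits at a visited state" argument — making precise that an occurrence of a prescribed word in the rich sequence $\alpha$ can be aligned with a time at which the robot occupies a prescribed state, given only that the state at time $n$ is a function of $\alpha_1\cdots\alpha_{n-1}$ and that the word occurs infinitely often. The honest route is to lean on~\cite[Lemma 4.3]{Com83}, which is precisely Proposition~\ref{Prop_rich_plans_go_to_essential} minus the grid specialization, and then note that strong connectivity of each robot grid search problem collapses the essential-class machinery to $S = X$ with $N$ arbitrary.
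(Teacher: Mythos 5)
The paper offers no proof of this proposition at all --- it is imported verbatim as \cite[Lemma 4.3]{Com83} --- so if your intent were simply to cite that lemma, you would be doing exactly what the paper does. The trouble is that the self-contained argument you sketch around the citation has a genuine gap, and it sits precisely at the step you flag as ``mildly delicate.'' From ``the robot is at $x^*$ at infinitely many times'' and ``the word $w$ occurs at infinitely many positions of $\alpha$'' you cannot conclude that some occurrence of $w$ begins at a time when the robot is at $x^*$: two infinite subsets of $\N$ need not intersect, richness gives no control over \emph{where} occurrences of $w$ fall, and the visit times to $x^*$ are themselves determined by $\alpha$, so nothing a priori rules out the two sets being disjoint. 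Your ``prescribe the next $\ell$ digits at infinitely many starting points'' is exactly what richness does \emph{not} let you do, and the fallback of appealing to \cite[Lemma 4.3]{Com83} is circular, since that lemma \emph{is} the statement being proved. The same invalid inference recurs in your second step (``applying the same `prescribe the next digits' argument, the robot reaches $y$ at some time $\geq n$''), and specializing to the strongly connected grid case does not escape it: even with $S = X$ and $N = 1$, the ``visits every $x \in S$ infinitely often'' half still rests on the alignment claim.

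The standard repair avoids alignment entirely by constructing a single word that works from \emph{every} state, so that it is irrelevant where in $\alpha$ it occurs. Enumerate $X = \{y_1, \ldots, y_k\}$. To show the run enters the (closed) union $T$ of essential classes, pick $v_1$ with $y_1 \cat c(v_1) \in T$ (possible by \cite[Proposition 4.2]{Com83}), then $v_2$ with $\big(y_2 \cat c(v_1)\big) \cat c(v_2) \in T$, and so on; the concatenation $W = v_1 v_2 \cdots v_k$ then satisfies $y_i \cat c(W) \in T$ for every $i$, because closedness of $T$ preserves membership while the remaining blocks $v_{i+1}, \ldots, v_k$ are read. Richness guarantees $W$ occurs somewhere in $\alpha$, and any single occurrence forces the run into $T$ regardless of which state it starts from. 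The ``infinitely often'' half is the same trick with ``drives into $T$'' replaced by ``passes through $y$'' (using strong connectivity of the essential class): the resulting word $W_y$ forces a visit to $y$ from any state, and since $W_y$ occurs in $\alpha$ infinitely often, $y$ is visited infinitely often. This is the idea your proof is missing; without it, the argument does not go through.
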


\begin{lemma}[\textbf{Essential class of finite trajectories}] \label{Lemma_Essential_class_of_finite_trajectories}
Let $P_X = (X,U,f,x_I) \in \Pgrid (\Z \times \Z)$ and for each $k \in \N$ let $\widehat{X}^k := \big(X \times U^k, U, \widehat{f}, (x_I, (u_1, \ldots, u_k)) \big)$ be a DFA where $\widehat{f}\big((x,(u_1, \ldots, u_k)), u\big) := \big(f(x, u_1), (u_2, \ldots, u_k, u) \big)$ for all $x \in X$ and $u \in U$.
Then $X \times U^k$ is an essential class for $\widehat{X}^k$.
\end{lemma}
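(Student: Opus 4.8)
The plan is to show that $X \times U^k$ is an essential class for the automaton $\widehat X^k$, which by the definition of essential class amounts to showing that $\widehat X^k$ is strongly connected: any state $(x, (u_1,\ldots,u_k)) \in X \times U^k$ is reachable from any other state $(x', (u_1',\ldots,u_k')) \in X \times U^k$ under the transition function $\widehat f$. First I would unwind the dynamics of $\widehat f$: applying a single action $u$ shifts the ``queue'' $(u_1,\ldots,u_k) \mapsto (u_2,\ldots,u_k,u)$ and advances the physical state $x \mapsto f(x,u_1)$ by the \emph{oldest} queued action, not by $u$. So after applying an action word $w = (w_1,\ldots,w_m)$ with $m \ge k$, the queue component becomes the last $k$ symbols of the concatenation $(u_1,\ldots,u_k,w_1,\ldots,w_m)$, i.e.\ $(w_{m-k+1},\ldots,w_m)$, and the physical state becomes $f(x, (u_1,\ldots,u_k,w_1,\ldots,w_{m-k}))$ — the original word $w$ with its last $k$ symbols ``still in the pipeline.''

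The key observation is then the following. To reach $(x', (u_1',\ldots,u_k'))$ from $(x, (u_1,\ldots,u_k))$, choose the action word $w = (a_1,\ldots,a_\ell, u_1',\ldots,u_k')$ where $(a_1,\ldots,a_\ell)$ is any action sequence in the original grid problem $P_X$ that steers the physical state from $f(x,(u_1,\ldots,u_k))$ (the state reached after the first $k$ ``flushing'' steps that drain the initial queue) to the target $x'$. Such a flushing-then-steering word exists because $P_X \in \Pgrid(\Z\times\Z)$ is a connected grid, hence strongly connected as a planning problem, so $x'$ is reachable from any state, in particular from $f(x,(u_1,\ldots,u_k))$; this is exactly the remark in the excerpt that the only essential class of a robot grid search problem is $X$ itself. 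After applying $w$, the queue is its last $k$ symbols, namely $(u_1',\ldots,u_k')$, as desired, and the physical state is $f\big(x,(u_1,\ldots,u_k,a_1,\ldots,a_\ell)\big) = x'$. Hence $(x',(u_1',\ldots,u_k'))$ is reachable from $(x,(u_1,\ldots,u_k))$.

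Since this holds for every ordered pair of states, every state of $\widehat X^k$ is reachable from every other, so $X \times U^k$ is a single strongly connected class; being the entire state space, it is maximal, and therefore it is an essential class of $\widehat X^k$, completing the proof. The one point demanding care — the ``main obstacle'' — is the bookkeeping of the $k$-step delay between when an action is issued and when it takes physical effect: one must correctly separate the word $w$ into the part that actually moves the robot and the length-$k$ suffix that merely loads the final queue, and check the edge behaviour (the first $k$ issued actions correspond to physical moves $u_1,\ldots,u_k$ regardless of what we issue, so we should think of those $k$ flushing moves as fixed and only our issued suffix as free — but since any flushing word lands us \emph{somewhere} in $X$ and $X$ is strongly connected, we can still steer from there). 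Making this index arithmetic precise is routine but is where an incorrect proof would most plausibly slip.
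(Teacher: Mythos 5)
Your proposal is correct and follows essentially the same route as the paper's proof: the paper likewise picks a steering word $\histu$ with $x \cat (u_1,\ldots,u_k) \cat \histu = x'$ and applies $\widehat{w} := \histu \cat (u_1',\ldots,u_k')$, so that the old queue is flushed, the steering word moves the physical state to $x'$, and the suffix loads the target queue. Your additional bookkeeping about the $k$-step delay is exactly the computation the paper leaves implicit in evaluating $\widehat{f}(\widehat{z},\widehat{w})$.
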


\begin{proof}
Let $\widehat{z} := \big(x, (u_1, \ldots, u_k)\big)$ and $\widehat{z}' := \big(x', (u_1', \ldots, u_k')\big)$.
Since every robot grid search problem is strongly connected, there exists some action sequence $\histu$ for which $x \cat (u_1, \ldots, u_k) \cat \histu = x'$. Define $\widehat{w} := \histu \cat (u_1', \ldots, u_k')$. Then
\begin{align*}
\widehat{f}(\widehat{z}, \widehat{w}) &=
\widehat{f}\big((x,(u_1, \ldots, u_k)), \histu \cat (u_1', \ldots, u_k')\big) \\
&= \big(x', (u_1', \ldots, u_k') \big) \\
&= \widehat{z}'.~\square
\end{align*}
\end{proof}

We immediately obtain the following result.

\begin{corollary}[\textbf{Rich plan tries every action sequence from every state}] \label{Cor_rich_tries_everything}
In every $P_X 
\in \Pgrid (\Z \times \Z)$ a robot following a rich plan $\histu$ will apply every finite action sequence from every state $x \in X$ infinitely many times.
\end{corollary}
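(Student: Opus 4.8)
The plan is to reduce the statement to Proposition~\ref{Prop_rich_plans_go_to_essential} applied to the history-augmented automaton $\widehat{X}^k$ of Lemma~\ref{Lemma_Essential_class_of_finite_trajectories}. Fix an arbitrary state $x^\star \in X$, a length $k \in \N$, and a target action sequence $(u_1^\star,\ldots,u_k^\star) \in U^k$; it suffices to show that a robot running the rich plan $\histu = \big(c(\alpha_n)\big)_{n=1}^\infty$ from $x_I$ applies $(u_1^\star,\ldots,u_k^\star)$ starting from $x^\star$ at infinitely many stages. Write $(x_n)_{n\ge 1}$ for the trajectory generated by $\histu$, so $x_1 = x_I$ and $x_{n+1} = f(x_n, c(\alpha_n))$.

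First I would package the ``next $k$ actions'' into the state. Consider the run of $\widehat{X}^k$ that starts at $\widehat{x}_1 := \big(x_I, (c(\alpha_1),\ldots,c(\alpha_k))\big)$ and, at stage $n$, applies the action $c(\alpha_{n+k})$. A one-line induction on $n$ using the definition of $\widehat{f}$ shows that the resulting augmented trajectory is $\widehat{x}_n = \big(x_n, (c(\alpha_n),\ldots,c(\alpha_{n+k-1}))\big)$: the $X$-coordinate evolves exactly as the original trajectory, while the $U^k$-coordinate is the length-$k$ window of $\histu$ beginning at stage $n$. Hence ``being in augmented state $(x^\star,(u_1^\star,\ldots,u_k^\star))$ at stage $n$'' is literally the same event as ``$x_n = x^\star$ and the robot then applies $(u_1^\star,\ldots,u_k^\star)$'', so it is enough to show this augmented state is visited infinitely often.

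The action sequence driving $\widehat{X}^k$ is $\big(c(\alpha_{n+k})\big)_{n=1}^\infty$, which is precisely the rich plan based on the shifted number $\alpha' := \alpha_{k+1}\alpha_{k+2}\cdots$; here I would note that $\alpha'$ is again rich, since every finite word over $\{0,\ldots,b-1\}$ occurs in $\alpha$ infinitely often and deleting the first $k$ digits does not destroy this property. Since $\widehat{X}^k$ has the finite state space $X \times U^k$, Proposition~\ref{Prop_rich_plans_go_to_essential} applies: the augmented trajectory eventually enters an essential class $S$ of $\widehat{X}^k$ and visits every state of $S$ infinitely many times. But by Lemma~\ref{Lemma_Essential_class_of_finite_trajectories} the whole space $X \times U^k$ is an essential class, and being the entire state space it is the only one, so $S = X \times U^k$. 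In particular $(x^\star,(u_1^\star,\ldots,u_k^\star))$ is visited infinitely often, which by the previous paragraph is exactly the claim. As $x^\star$, $k$, and $(u_1^\star,\ldots,u_k^\star)$ were arbitrary, the corollary follows.

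The only place that needs care --- and the reason the result is ``immediate'' only after the two preparatory results --- is the index bookkeeping between the two automata: the action fed to $\widehat{X}^k$ at stage $n$ must be $c(\alpha_{n+k})$ rather than $c(\alpha_n)$, and one must check that this shifted sequence is still a rich plan so that Proposition~\ref{Prop_rich_plans_go_to_essential} is legitimately invoked on $\widehat{X}^k$. Everything else is a direct substitution into the definitions.
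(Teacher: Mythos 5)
Your proof is correct and follows essentially the same route as the paper's: both apply Proposition~\ref{Prop_rich_plans_go_to_essential} to the history-augmented automaton $\widehat{X}^k$ of Lemma~\ref{Lemma_Essential_class_of_finite_trajectories} and read off the claim from the identification of augmented states with ``state plus next $k$ actions.'' The only difference is that you make explicit the index bookkeeping (feeding $c(\alpha_{n+k})$ to $\widehat{X}^k$ and checking that the shifted sequence is still rich), which the paper's three-sentence proof leaves implicit.
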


\begin{proof}
Let $\widehat{X}^k$ be the DFA in Lemma~\ref{Lemma_Essential_class_of_finite_trajectories}. According to Proposition~\ref{Prop_rich_plans_go_to_essential}, the automaton $\widehat{X}^k$ will visit every state $\big(x, (u_1, \ldots, u_k)\big) \in X \times U^k$ infinitely many times. This is clearly equivalent with the robot visiting state $x \in X$ and applying the action sequence $(u_1, \ldots, u_k)$ immediately afterwards. $\square$
\end{proof}

Corollary~\ref{Cor_rich_tries_everything} implies in particular that there exists, for any robot grid search problem $\mathcal{P}_X$, some finite action sequence $\histu(X)$ for which the trajectory $x_I \cat \histu(X)$ visits every state $x \in X$ (in particular each $x \in X_G$) from any initial state $x_I$.

A strong form of a universal plan for a fixed robot grid search problem is that of a \emph{synchronizing sequence}~\cite{Hof23,Vol08}.\footnote{The related notion of \emph{homing sequences}~\cite{San04} concerns inferring the final state after applying an action sequence, on the basis of both actions and observations.}
An action sequence $\histu$ \emph{synchronizes} a planning problem $P_X \in \Pgrid (\Z \times \Z)$, if there exists some $x_F \in X$ for which $x_I \cat \histu = x_F$ for all initial states $x_I$.

We say that $P_X$ is \emph{synchronizing}, if some $\histu$ synchronizes it. In a synchronizing planning problem, uncertainty about the current state of the robot can in principle be reduced by applying a single action sequence, and the concept has found useful applications~\cite{Nat86,OkaLav06}.

A planning problem $P_X$ is \emph{pairwise synchronizing}, if for any $x, x' \in X$ there exists an action sequence $\histu$ for which $x \cat \histu = x' \cat \histu$.
It turns out that this seemingly weaker property actually implies synchronicity~\cite[Theorem 1.14]{San04}. Thus, Lemma~\ref{Lemma_Pairwise_Synch} below implies that every robot grid search problem is synchronizing.

\begin{lemma}[\textbf{Robot grid search problems are pairwise synchronizing}] \label{Lemma_Pairwise_Synch}
Let $(X, f, U) \in \Pgrid (\Z \times \Z)$. Then for all $x_I, x_I' \in X$, there exists a finite action sequence $\histu$ for which $x_I \cat \histu = x_I' \cat \histu$.
\end{lemma}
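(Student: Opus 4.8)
The plan is to exploit the grid geometry directly: in a connected grid, any two positions can be brought together by a sequence of moves that uses the obstacle-stopping behavior to ``collapse'' one coordinate at a time. First I would reduce to a one-dimensional observation. Suppose $x_I = (a_1,b_1)$ and $x_I' = (a_2,b_2)$. Since $X$ is finite, it is contained in a bounding box; let $M$ be larger than the diameter of this box. I claim that applying the action \texttt{left} repeatedly $M$ times drives \emph{every} state to the leftmost column it can reach, and — this is the key point — two states in the same row end up at the \emph{same} cell after $M$ left-moves, because each one slides left until blocked by the same obstacle configuration (the transition function~\eqref{Eq_robot_movement_model} is deterministic and position-dependent, so from a given cell the stopping cell under repeated \texttt{left} is determined). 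So repeated \texttt{left} synchronizes any two states that lie in a common row; symmetrically, repeated \texttt{down} synchronizes any two states in a common column.

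The next step is to get $x_I$ and $x_I'$ into a common row or column in the first place. Here I would invoke Corollary~\ref{Cor_rich_tries_everything}, or more directly the remark following it: there is a finite action sequence $\histu(X)$ whose trajectory from \emph{any} initial state visits every state of $X$. Actually a cleaner route is available from Lemma~\ref{Lemma_Essential_class_of_finite_trajectories} and Proposition~\ref{Prop_rich_plans_go_to_essential}: because $X$ is strongly connected, from $x_I'$ there is an action sequence $\histw$ with $x_I' \cat \histw = $ (the cell directly reachable into the row of $x_I$), but I do not get to apply $\histw$ to $x_I$ and $x_I'$ independently — the same $\histu$ must act on both. So instead I would argue: first apply $M$ copies of \texttt{left} then $M$ copies of \texttt{down}; this sends every state of $X$ to the unique ``sink'' cell $x_\ast$ of the combined operation, provided $X$ has the property that sliding maximally left and then maximally down lands everything at one point. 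That property need not hold for an arbitrary connected grid (imagine a U-shaped corridor), so this naive attempt fails and must be patched.

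The honest approach, and the one I expect the proof to take, is inductive on $|X|$ or on the ``distance'' between the two states, repeatedly applying short synchronizing gadgets. Concretely: given $x_I \neq x_I'$, use strong connectivity to find an action sequence $\histu_1$ that moves $x_I$ to a cell adjacent to (in the same row/column as) where $\histu_1$ takes $x_I'$; but since $\histu_1$ acts on both, I should instead find $\histu_1$ such that the pair $(x_I \cat \histu_1,\, x_I' \cat \histu_1)$ is ``closer'' in a suitable potential — e.g. first drive $x_I$ to a corner cell using strong connectivity (the sequence exists, and applying it to $x_I'$ lands it somewhere), then apply a column-collapsing run of \texttt{down}s, then a row-collapsing run of \texttt{left}s, and track that the pair distance strictly decreases, or that one of the two coordinates of the pair difference is permanently zeroed. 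Iterating, the pair difference reaches $(0,0)$, i.e. $x_I \cat \histu = x_I' \cat \histu$, and concatenating the finitely many stages gives the desired finite $\histu$.

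The main obstacle is exactly the coupling: the \emph{same} sequence must be applied to both states, so one cannot naively ``navigate'' each state to a meeting point. The real content is to find gadgets — runs of a single repeated action long enough to saturate the bounded environment — that are guaranteed to be non-expansive and strictly contracting on the pair-distance regardless of where the two states currently sit. Establishing that a long run of \texttt{left} (or \texttt{down}) is non-expansive on pairs in the same row, and that one can always first arrange a common row via strong connectivity without undoing progress, is where the care is needed; once those gadget lemmas are in hand, the induction on a finite potential closes the argument, and the passage from pairwise to full synchronization is then immediate by the cited \cite[Theorem 1.14]{San04}.
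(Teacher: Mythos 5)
There is a genuine gap here, and also a false step. The false step first: your key gadget claim --- that $M$ repetitions of \texttt{left} send any two states in a common row to the same cell, and more generally that such a run is non-expansive on same-row pairs --- is not true for an arbitrary connected grid. If the two states share a row but an obstacle in that row lies between them (the row of $X$ need not be an interval; connectivity may route around through other rows), each slides left to a \emph{different} stopping cell, and the horizontal gap between them can even \emph{increase} (e.g.\ cells at columns $2$ and $6$ with the row blocked at column $4$: they stop at columns $0$ and $5$). So neither the synchronizing nor the non-expansive property of the sliding gadget holds. The deeper gap is that after you correctly diagnose why the naive slide-to-a-corner argument fails, the repair you propose --- an induction on $|X|$ or on a pair-distance potential, driven by ``gadget lemmas'' establishing strict contraction --- is never carried out: you do not exhibit a potential, nor a sequence that provably decreases it from an arbitrary pair configuration. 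That contraction step is the entire content of the lemma, and you explicitly defer it (``once those gadget lemmas are in hand\ldots''), so the proposal does not constitute a proof.

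For comparison, the paper closes exactly this gap with a chasing argument that sidesteps sliding gadgets altogether. Take $\histu_1$ to be a \emph{shortest} action sequence with $x_I \cat \histu_1 = x_I'$; minimality guarantees the chaser suffers no collisions, so it lands exactly on $x_I'$, while the chased robot moves to some $x_1' = x_I' \cat \histu_1$. Iterating (the chaser always pursues the chased robot's previous position along a shortest path), finiteness of $X$ forces the chased robot to collide with the boundary whenever the two are still distinct --- otherwise both would translate by identical displacements forever and leave the bounded grid --- and each such collision makes the next chasing sequence strictly shorter than the last. The lengths $|\histu_n|$ thus form a (eventually strictly) decreasing sequence of nonnegative integers and reach $0$, at which point the states coincide. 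The potential that does the work is the length of the shortest chasing sequence, not the Euclidean or row/column distance; if you want to salvage your inductive framing, that is the quantity you should be contracting.
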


\begin{proof}
Let $x_I, x_I' \in X$. Since $X$ is connected, there exists some shortest action sequence $\tilde{u}_1$ for which $x_I\cat \histu_1 = x_I'$. Since $\histu_1$ is the shortest such sequence, the path from $x_I$ to $x_I'$ contains no collisions with the environment boundary. Define $x_1 := x_I \hspace{0.1mm}\cat \tilde{u}_1 = x_I'$ and $x_1' := x_I' \cat \tilde{u}_1$. Iterating, we similarly choose for each $n > 1$ some shortest action sequence $\histu_n$ for which $x_{n-1} ~\hspace{-1.5mm}\cat \histu_n = x_{n-1}'$, and define $x_n := x_{n-1} ~\hspace{-1.5mm}\cat \histu_n = x_{n-1}'$ and $x_n' := x_{n-1}' \cat \histu_n$. Intuitively, a robot starting from $x_I$ is chasing the one starting from $x_I'$. We claim that there exists some $N \in \N$ for which $x_N = x_N'$. This intuitively follows because only the escaping robot is hitting the environment boundaries.

Assume instead that there exists some $n \in \N$ for which $x_n \neq x_n'$ and that neither of the robots ever hits the environment boundary after this index. This implies that for every $k > n$ the displacements $x_k - x_{k-1}$ and $x_k' - x_{k-1}'$ are the same. Hence, $x_k - x_n = (k-n)(x_{n+1} - x_n)$ for all $k > n$, which is a contradiction since the distance $\|x_k - x_n\|$ is bounded from above due to the finiteness of $X$. Thus, if $x_n \neq x_n'$, there must exist some $k > n$ for which the escaping robot travels fewer steps from $x_{k-1}'$ to $x_k'$ than the one chasing it from $x_{k-1}$ to $x_k$. Since the next action sequence $\histu_{k+1}$ minimizes the actions necessary to reach $x_k'$ from $x_k$, it will have less actions than the previous sequence $\histu_k$. It follows from the above that $\|x_N' - x_N\| = 0$ for some $N\in\N$. Thus, $x_I \cat \histu_1 \cat \cdots \cat \histu_N = x_I' \cat \histu_1 \cat \cdots \cat \histu_N$. $\square$
\end{proof}

It should be stressed that none of the results presented in this section require a two-dimensional setting; they generalize to all finite dimensions. Rich numbers can thus provide universal plans for a wide array of discrete planning problems.

\section{Universal Plans for Continuous Environments} \label{Sec_Universal_Plans_for_Cont_Env}

In this section we consider collections 
of planning problems in which a point robot explores a compact, connected, planar state space $X \subset \mathbb{R}^2$ with non-empty interior $\textrm{int}X$. The aim is to reach an open $r$-neighborhood $X_G := B_r(x_G) \cap \textrm{int}X$ around some goal state $x_G \in \textrm{int}X$, from the initial state $x_I \in \textrm{int}X$.
We refer to $r$ here as the \emph{goal radius}.
We show that there exists a universal plan that allows the robot to solve this task essentially without sensing, in any environment $X$ with suitably regular boundary, for any $x_I, x_G \in \textrm{int}X$ and any $r>0$.

 Assume the robot is able to take arbitrarily small steps in all the coordinate directions (\verb|left|, \verb|right|, \verb|up|, and \verb|down|). The set of actions $U$ is thus given by
\begin{equation*} 
U := \big\{ sv \mid s \in \mathbb{R}_+, \,\, v \in \{v_\textrm{l}, v_\textrm{r}, v_\textrm{u}, v_\textrm{d}\} \big\},
\end{equation*}
where $\{v_\textrm{l}, v_\textrm{r}, v_\textrm{u}, v_\textrm{d}\} := \{(-1,0), (1,0), (0,1), (0,-1)\}$.
The transition function $f : X \times U \to X$ is 
\begin{equation} \label{Eq_robot_movement_model_again}
f(x, u) := \begin{cases}
    x + u, & \textrm{if} \,\,  x + u \in X, \\
    x, & \textrm{if} \,\,  x + u \notin X.
\end{cases}
\end{equation}
Thus, the robot does not move if it is about to hit the environment boundary.

Assume for the moment that the step size of the robot is fixed to some $s > 0$. If the step size is too large, the robot may be unable to pass through narrow corridors, or it might miss the target area $B_r(x_G)$ around the goal state. We next analyze the existence of sufficiently fine step sizes for a given planning problem.

\subsection{Finite Discretizations for Planar Planning Problems} \label{Sec_Finite_Discretizations_for_Planar}
Denote by $\mathcal{C}^\circ(\R^2)$ the collection of connected, compact subsets of $\R^{2}$ with non-empty interior. To analyze the sufficiency of the movement resolution, we associate with every triple $(X, x_I, m) \in \mathcal{C}^\circ(\R^2) \times \textrm{int}X \times \N$ the corresponding grid
\begin{equation} \label{Eq_Def_Env_Discretization_Grid}
X_\textrm{grid}(x_I, m) := \mathrm{int}X \cap \big(x_I + 2^{-m} (\mathbb{Z} \times \mathbb{Z})\big).
\end{equation}
Intuitively, the grid $X_\textrm{grid}(x_I, m)$ is a discretization of $X$ at resolution $2^{-m}$. Ideally, such a grid is connected in the sense of Definition~\ref{Def_Connected_Grid}, allowing the robot to reach the goal from any initial state. However, this is not generally the case.

\begin{definition}[\textbf{Globally connected grid representation (GCGR); suf\-fi\-cient scaling factor}] \label{Def_Globally_connected_representation_and_suff_scaling_factor}
An environment $X \in \mathcal{C}^\circ(\R^2)$ \emph{admits a globally connected grid representation (GCGR)}, if there exists some $\eta(X) \in \N$ such that for all $m \geq \eta(X)$ and all initial states $x_I \in X$, the grid discretization $X_\textrm{grid}(x_I, m)$ defined in~\eqref{Eq_Def_Env_Discretization_Grid} is connected in the sense of Definition~\ref{Def_Connected_Grid}. We then call $\eta(X)$ a  \emph{sufficient scaling factor} for $X$.
\end{definition}

A natural step would be to characterize all the environments that admit a GCGR. Proposition~\ref{Prop_sufficient_independent_scaling_exists} in Section~\ref{sec:scale} states that all environments whose complement is a set with positive reach admit a GCGR. For example, environments with a differentiable boundary with bounded curvature satisfy this condition. Unfortunately, not all polygonal environments admit such a representation, since they may exhibit sharp wedges where initial states near the tip of the wedge require increasingly fine grid scaling to obtain connectedness.

\begin{figure}[tb]
\begin{tabular}{ccc}
\includegraphics[width=4cm]{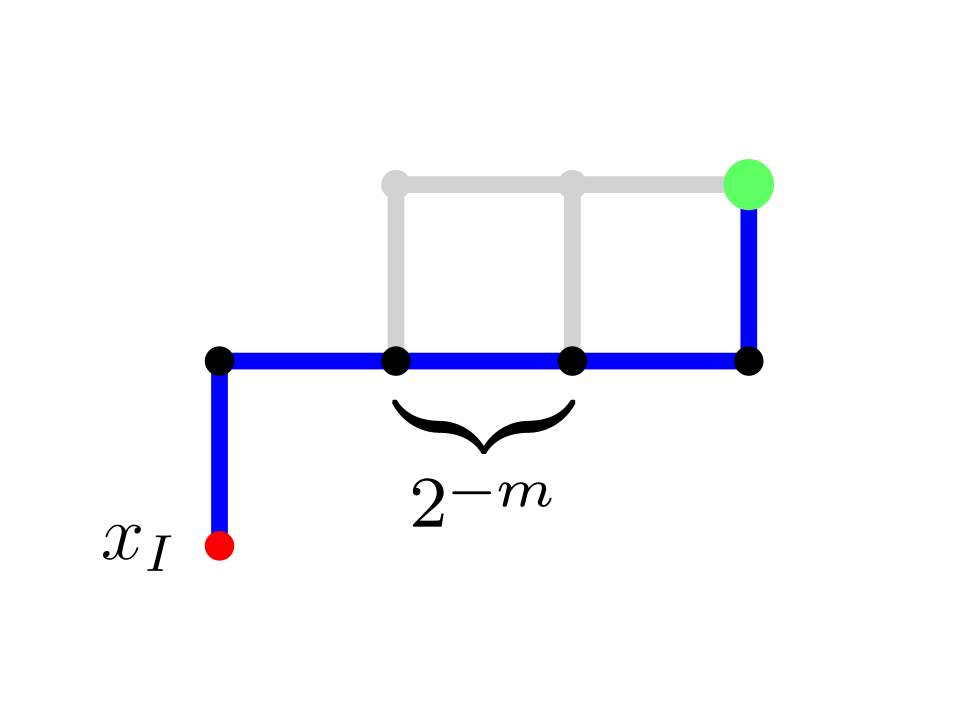} &
\includegraphics[width=4cm]{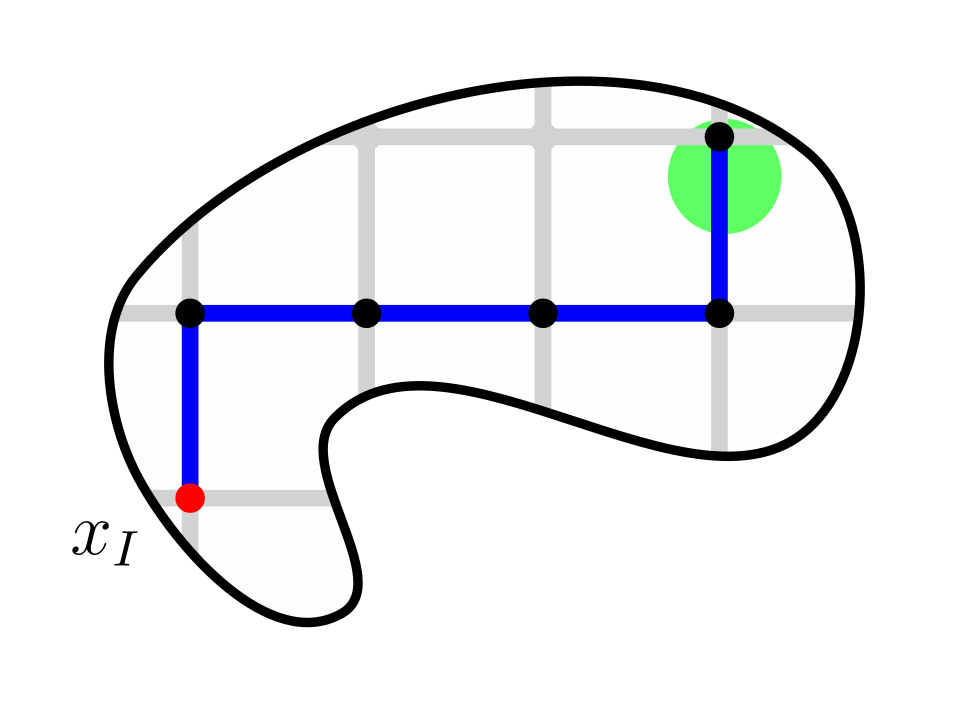} &
\includegraphics[width=4cm]{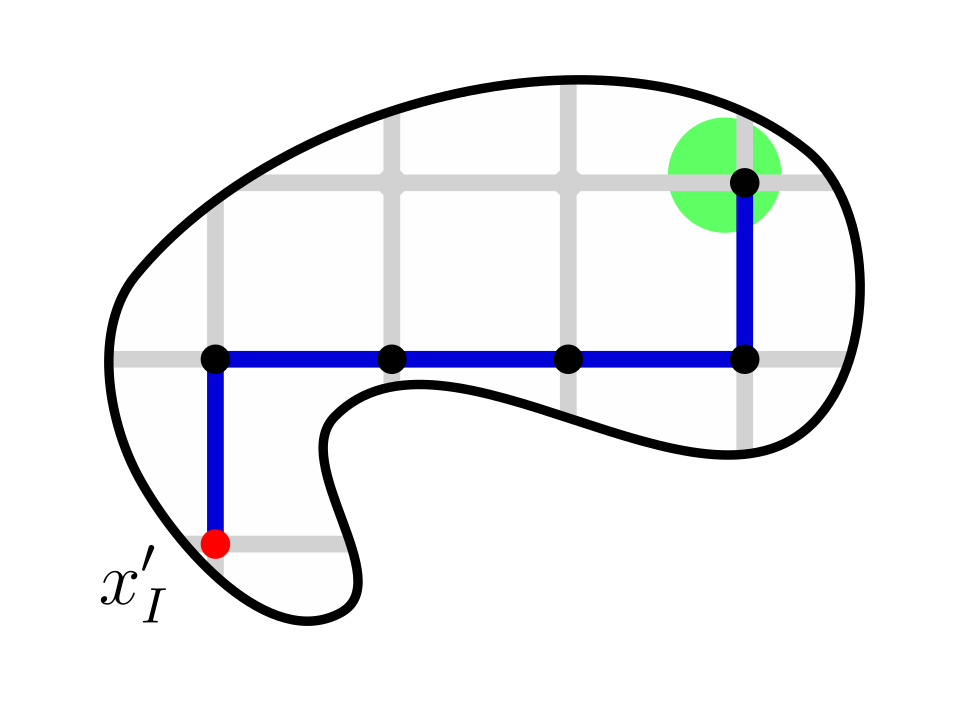} \\
a. & b. & c.
\end{tabular}
\caption{\label{fig:grid_plans} Discretization of the planar planning problem. (a) The discretization $X_\textrm{grid}(x_I, m) \subset \big(x_I + 2^{-m} (\mathbb{Z} \times \mathbb{Z})\big)$ at scaling resolution $2^{-m}$. The horizontal and vertical lines connecting the vertices are drawn here to indicate possible transitions, but are not included in $X_\textrm{grid}(x_I, m)$. The dark blue line indicates a possible plan taking the initial state (red dot) to the goal state (green dot). (b) \& (c) Two possible ways the same grid can emerge from two different initial states $x_I, x_I' \in \textrm{int}X$.
} 
\end{figure}

Let $\Pglobal^m$ be the set of planning problems $P_X = (X, U, f, x_I, X_G)$ in which the environment $X$ admits a GCGR with a sufficient scaling factor $m$, $X_G = B_r(x_G) \cap \textrm{int}X$ for some $x_G \in \textrm{int}X$, $r > 0$, and let $\Pglobal := \bigcup_{m=1}^\infty \Pglobal^m$. In Section~\ref{sec:scale} we show that there exist plans $\histu$ that are universal with respect to $\Pglobal$. To this end, we next analyze,
for a fixed $X \in \CCC$ and $m \in \N$, the set
\begin{equation} \label{Eq_Def_Set_of_grids_at_a_scale}
\mathcal{G}(X, m) := \big\{X_\textrm{grid}(x_I, m) \, \mid \, x_I \in \textrm{int}X \big\}
\end{equation}
of grid representations~\eqref{Eq_Def_Env_Discretization_Grid} associated with all possible initial states $x_I \in X$. The key observation is that although there are uncountably many possible initial states $x_I \in \textrm{int}X$, once the resolution $m$ is fixed, each of them corresponds to a position on a finite grid belonging to a finite family of grids, each one approximating $X$ at this resolution, see Figure~\ref{fig:grid_plans}.

\begin{lemma}[\textbf{Finitely many grids}] \label{Lemma_finitely_many_grids}
Let $X \in \CCC$. Then for each $m \in \N$, the set $\mathcal{G}(X, m)$ in~\eqref{Eq_Def_Set_of_grids_at_a_scale} contains up to translation only finitely many unique grids, which all have finitely many states. If $X$ admits a GCGR and $m \geq \eta(X)$, then all the grids in $\mathcal{G}(X, m)$ are connected in the sense of Definition~\ref{Def_Connected_Grid}.
\end{lemma}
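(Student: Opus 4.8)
\textbf{Proof plan for Lemma~\ref{Lemma_finitely_many_grids}.}

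The plan is to reduce the uncountable family $\mathcal{G}(X,m)$ to a finite one by observing that each grid $X_\textrm{grid}(x_I,m)$ depends on $x_I$ only through the coset $x_I + 2^{-m}(\Z\times\Z) \in \R^2 / 2^{-m}(\Z\times\Z)$, and that only finitely many such cosets can arise up to translation. First I would fix the resolution $m$ and note that, by~\eqref{Eq_Def_Env_Discretization_Grid}, $X_\textrm{grid}(x_I,m)$ is entirely determined by the lattice $L := 2^{-m}(\Z\times\Z)$, its translate $x_I + L$, and the set $\mathrm{int}X$. Two initial states $x_I, x_I'$ that differ by an element of $L$ give literally the same grid; more generally, $x_I' = x_I + t$ for arbitrary $t\in\R^2$ yields a grid that is the $t$-translate of $X_\textrm{grid}(x_I - t', m)$ where $t'$ accounts for the shift of $\mathrm{int}X$ versus the lattice. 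So the right bookkeeping is: up to translation, the grid is determined by the position of the lattice $L$ relative to $\mathrm{int}X$, i.e. by the point $(x_I \bmod L)$ together with the ``shape'' of $X$, which is fixed.

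The second step is the compactness argument giving finiteness. Since $X$ is compact, it is contained in some large axis-aligned square, hence $X_\textrm{grid}(x_I,m)$ has at most $C(X)\cdot 4^m$ points for a constant $C(X)$ depending only on the diameter of $X$; this already gives the ``finitely many states'' claim for each individual grid. For the ``finitely many grids up to translation'' claim, I would argue that the combinatorial type of the grid — which lattice points of a bounded patch lie in $\mathrm{int}X$ and which do not — changes only when the lattice translate $x_I + L$ crosses the boundary $\partial X$. The fundamental domain of $L$ is a square of side $2^{-m}$; as $x_I$ ranges over this (compact) square, the boundary $\partial X$, being a compact set, is crossed by the (finitely many, at most $C(X)4^m$ relevant) lattice points only finitely often, partitioning the fundamental domain into finitely many regions on each of which the grid is constant up to translation. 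Hence $\mathcal{G}(X,m)$ has finitely many elements modulo translation.

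The third step is immediate from the definitions: if $X$ admits a GCGR and $m \geq \eta(X)$, then by Definition~\ref{Def_Globally_connected_representation_and_suff_scaling_factor} the grid $X_\textrm{grid}(x_I,m)$ is connected in the sense of Definition~\ref{Def_Connected_Grid} for \emph{every} $x_I \in X$, so in particular every grid in $\mathcal{G}(X,m)$ is connected, and translation preserves connectedness.

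I expect the main obstacle to be making the second step fully rigorous rather than intuitively plausible: one must be careful that the set of initial states $x_I$ for which a given lattice point lands exactly on $\partial X$ is genuinely negligible / does not create infinitely many distinct combinatorial types, and that the grid really is constant (up to translation) on the open cells of the resulting arrangement. A clean way to handle this is to work directly with the quotient torus $\mathbb{T}_m := \R^2 / L$ and the map $x_I \mapsto$ (combinatorial type of $X_\textrm{grid}(x_I,m)$); the preimages of the finitely many types are defined by finitely many conditions of the form ``$x_I + \ell \in \mathrm{int}X$'' with $\ell$ ranging over a bounded subset of $L$, and each such condition cuts $\mathbb{T}_m$ by the (measurable, and for reasonable $X$, finite-complexity) set $\partial X - \ell$; finiteness of the common refinement then needs only that $X$ is compact with, say, boundary of finite length or finitely many connected components — assumptions that hold in all cases of interest here and can be invoked as a mild regularity hypothesis, or circumvented by noting we only ever need finiteness of the set of \emph{connected} grids, which is all Lemma~\ref{Lemma_finitely_many_grids} is used for downstream.
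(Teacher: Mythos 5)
Your first and third steps are fine (finitely many states per grid from compactness; connectedness directly from Definition~\ref{Def_Globally_connected_representation_and_suff_scaling_factor}), but your second step --- finiteness of $\mathcal{G}(X,m)$ up to translation --- takes a wrong turn and, as written, has a genuine gap. You try to prove finiteness by controlling the \emph{map} $x_I \mapsto$ (combinatorial type of the grid), arguing that the fundamental domain of $L = 2^{-m}(\Z\times\Z)$ gets partitioned into finitely many cells by the translates $\partial X - \ell$. That arrangement need not be finite under the lemma's hypotheses: $X \in \CCC$ is only assumed compact, connected, with non-empty interior, so $\partial X$ can be arbitrarily wild (not rectifiable, infinitely many components), and the ``mild regularity hypothesis'' you propose to invoke (finite boundary length, finitely many components) is simply not part of the statement. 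Your fallback --- that one only ever needs finiteness of the set of \emph{connected} grids --- does not prove the lemma as stated and does not obviously rescue the arrangement argument either.

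The point you are missing is that you do not need to understand how the type varies with $x_I$ at all; you only need to bound the number of possible values it can take. This is exactly what the paper does: applying $T_{x_I}(x) := x - x_I$ gives $T_{x_I}\big(X_\textrm{grid}(x_I,m)\big) = T_{x_I}(\mathrm{int}X) \cap 2^{-m}(\Z\times\Z)$, which is a subset of the single fixed finite set $2^{-m}(\Z\times\Z)\cap[-W,W]\times[-H,H]$, where $W,H$ are the width and height of the compact set $X$. A finite set has finitely many subsets, so there are finitely many grids up to translation --- even if the level sets of $x_I \mapsto \big[X_\textrm{grid}(x_I,m)\big]_{\sim_T}$ form an uncountably complicated partition of the fundamental domain. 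Your own phrase ``which lattice points of a bounded patch lie in $\mathrm{int}X$'' already contains this counting argument; you just needed to stop there and count subsets rather than cells of an arrangement.
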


\begin{proof}
Fix $m \in \N$. For each initial state $x_I \in X$ consider the translation 
defined by $T_{x_I}(x) := x - x_I$. Note that $X$ has finite width and height $W, H \in \R_+$ since it is compact. It follows that $T_{x_I}(\textrm{int}X) \subset [-W, W]\times [-H, H]$ for all $x_I \in X$. The first claim then follows from the fact that the power set of the intersection $2^{-m}(\Z \times \Z) \cap [-W, W]\times [-H, H]$ is finite, and the observation that the sets $X_\textrm{grid}(x_I, m)$ and $\big(T_{x_I}(X)\big)_\textrm{grid}\big((0,0), m\big)$ are in one-to-one correspondence via $T_{x_I}$. The last claim regarding connectedness follows directly from the definition of the sufficient scaling factor $\eta(X)$. $\square$
\end{proof}

Two grids $X_\textrm{grid}(x_I, m)$, $X_\textrm{grid}(x_I', m)$ are \emph{translation equivalent} if there exists some translation $T_z$ for which $X_\textrm{grid}(x_I, m) = T_z\big(X_\textrm{grid}(x_I', m)\big)$.
We then write
\begin{equation} \label{Eq_Def_Grid_equivalence}
X_\textrm{grid}(x_I, m) \sim_T X_\textrm{grid}(x_I', m).
\end{equation}
Note that we do not require in the above definition that $x_I = T_z(x_I')$.
Having this in mind, we say that two states $x, x' \in X$ are \emph{grid-equivalent at resolution $m \in \N$}, $x \sim_E x'$, if the translation implied by~\eqref{Eq_Def_Grid_equivalence} can be chosen to be $x' - x$:
\begin{equation}
x \sim_E x' \,\,\Longleftrightarrow \,\, X_\textrm{grid}(x, m) = T_{x' - x}\big(X_\textrm{grid}(x', m)\big).
\end{equation}
The individual grids $X_\textrm{grid}(x_I, m) \in \mathcal{G}(X,m)$ can thus be divided into finitely many equivalence classes $\big[X_\textrm{grid}(x_I, m)\big]_{\sim_T} \in \mathcal{G}(X,m) /\sim_T$, and to each initial state $x_I \in X$ we may associate the equivalence class $[x_I]_{\sim_E} \in \big[X_\textrm{grid}(x_I, m)\big]_{\sim_T}$. As the robot moves using a particular step size, it effectively explores the equivalence class $\big[X_\textrm{grid}(x_I, m)\big]_{\sim_T}$ from the initial state $[x_I]_{\sim_E}$. We now clarify when such explorations can be regarded equivalent as planning problems.

\begin{definition}[\textbf{Isomorphic planning problems}] \label{Def_Isomorphic_Planning_Problems}
Planning problems $P := (X, U, f, x_I, X_G)$ and $P' := (X', U', f', {x_I}', X_G')$ are \emph{isomorphic} if there exist bijective mappings $\varphi: X \to X'$ and $\xi: U \to U'$ for which (i) $\varphi(x_I) = {x_I}'$, (ii) $\varphi(X_G) = X_G'$, and (iii) $f'(\varphi(x), \xi(u)) = \varphi(f(x, u))$ for all $x \in X$ and $u \in U$.
\end{definition}

An isomorphism preserves the structure that is relevant for goal attainment. For instance, it can be readily checked that some action sequence $\histu$ satisfies $x_I\cat \histu \in X_G$ if and only if ${x_I}' \cat \xi(\histu) \in X_G'$.

Consider a planning problem $(X, U, f, x_I, X_G) \in \Pglobal$, so that $X$ admits a GCGR, and let $r > 0$ be the goal radius. For any $X_\textrm{grid}(x_I, m) \in \mathcal{G}(X,m)$ with $m \geq \eta(X)$, define the corresponding discretized goal set by 
\begin{equation} \label{Eq_Def_Discretized_goal_set}
x_G(x_I, m) :=
B_r(x_G) \cap \big(x_I + 2^{-m} (\mathbb{Z} \times \mathbb{Z})\big).
\end{equation}
It follows from the connectedness of $X_\textrm{grid}(x_I,m)$ that for sufficiently large $m$ the set $x_G(x_I, m)$ is non-empty for any initial state $x_I$. Consider then the family $\mathcal{P}_\textrm{grid}\big(X, x_G, m\big) := \big\{P_\textrm{grid}(X, x_I, x_G, m) \mid x_I \in X\big\}$ of discretized planning problems, where, for each $x_I \in X$,
\begin{equation}
P_\textrm{grid}(X, x_I, x_G, m) := \big(X_\textrm{grid}(x_I, m), U, f, x_I, x_G(x_I, m)\big).
\end{equation}

\begin{definition}[\textbf{Goal and grid search equivalence}] \label{Def_Goal_and_Grid_search_equivalence}
Let $X \in \CCC$, assume $X$ admits a GCGR and let $r > 0$. For any initial conditions $x_I, x_I' \in X$, the discretized planning problems $P_\textrm{grid}(X, x_I, x_G, m), P_\textrm{grid}(X, x_I', x_G, m) \in \mathcal{P}_\textrm{grid}(X, x_G, m)$
are said to be \emph{goal-equivalent}, denoted $P_\textrm{grid}(X, x_I, x_G, m) \sim_G P_\textrm{grid}(X, x_I', x_G, m)$, if there exists some translation $z \in \R^2$ for which
\begin{equation} \label{Eq_Def_Goal_equivalence}
X_\textrm{grid}(x_I', m) = T_z\big(X_\textrm{grid}(x_I, m)\big), \,\,\, \textrm{and}
\,\,\,\,  x_G(x_I', m) = T_z\big(x_G(x_I, m)\big).
\end{equation}
In other words, the two grids are translation equivalent and the goal set $x_G(x_I, m)$ maps to $x_G(x_I', m)$ under the same translation.
If~\eqref{Eq_Def_Goal_equivalence} holds with $z = x_I' - x_I$ so that the initial states are grid-equivalent, we say that the planning problems are \emph{grid search-equivalent}, and denote $P_\textrm{grid}(X, x_I, x_G, m) \sim_S P_\textrm{grid}(X, x_I', x_G, m)$.
\end{definition}
Note that goal-equivalent planning problems are generally not isomorphic in the sense of Definition~\ref{Def_Isomorphic_Planning_Problems} because their initial states may differ, and thus affect the set of successful plans. However, it can be readily checked that the maps $\varphi = T_{x_I' - x_I}$ and $\xi = \textrm{Id}$ satisfy the properties stated in Definition~\ref{Def_Isomorphic_Planning_Problems}, and thus define an isomorphism between grid search-equivalent planning problems.

\begin{proposition}[\textbf{Finitely many non-isomorphic planning problems}] \label{Prop_Finitely_Many_Nonisomorphic_Planning_Probs}
Assume $X \in \CCC$ admits a GCGR, and let $r > 0$. Then $\mathcal{P}_\textrm{grid}(X, x_G, m)$ contains only finitely many non-isomorphic planning problems for each $m \geq \eta(X)$.
\end{proposition}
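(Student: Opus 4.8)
The plan is to attach to each initial state a single finite-valued invariant in such a way that initial states sharing the invariant yield isomorphic planning problems; a finite bound on the invariant's range then bounds the number of isomorphism classes. Recall from the remark following Definition~\ref{Def_Goal_and_Grid_search_equivalence} that grid search-equivalent problems ($\sim_S$) are isomorphic, the isomorphism being a translation on states paired with $\xi = \textrm{Id}$ on $U$; concretely, if $\varphi$ is the translation by the vector $x_I' - x_I$ and $\xi = \textrm{Id}$, then $P_\textrm{grid}(X, x_I, x_G, m)$ and $P_\textrm{grid}(X, x_I', x_G, m)$ are isomorphic whenever $\varphi$ carries $X_\textrm{grid}(x_I,m)$ onto $X_\textrm{grid}(x_I',m)$ and $x_G(x_I,m)$ onto $x_G(x_I',m)$. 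Accordingly, to each $x_I \in X$ I would associate the pair
\[
\tau(x_I) \; := \; \big( X_\textrm{grid}(x_I, m) - x_I, \;\; x_G(x_I, m) - x_I \big),
\]
the grid and the discretized goal set, each re-centered so the initial state sits at the origin; by~\eqref{Eq_Def_Env_Discretization_Grid} and~\eqref{Eq_Def_Discretized_goal_set} both components of $\tau(x_I)$ are subsets of the lattice $2^{-m}(\Z \times \Z)$.

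First I would show that $\tau$ has finite range. Since $X$ is compact it has finite width $W$ and height $H$, so $\textrm{int}X - x_I \subset [-W,W]\times[-H,H]$ for every $x_I \in X$; hence $X_\textrm{grid}(x_I,m) - x_I$ lies in the fixed finite set $L := 2^{-m}(\Z\times\Z)\cap\big([-W,W]\times[-H,H]\big)$. For the goal component, the key point is that $x_G$ is a single fixed point while $x_I$ ranges over the bounded set $X$, so the displacement $x_G - x_I$ stays inside the bounded set $\{x - y \mid x,y\in X\}$; therefore $x_G(x_I,m) - x_I \subset B_r(x_G) - x_I$ lies in one fixed bounded disk, hence in a fixed finite subset $L'$ of $2^{-m}(\Z\times\Z)$. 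Thus $\tau$ takes at most $2^{|L|}\cdot 2^{|L'|}$ distinct values.

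Next I would check that $\tau(x_I) = \tau(x_I')$ forces $P_\textrm{grid}(X, x_I, x_G, m)$ and $P_\textrm{grid}(X, x_I', x_G, m)$ to be isomorphic. The equality of first components says precisely that translation by $x_I' - x_I$ carries $X_\textrm{grid}(x_I,m)$ onto $X_\textrm{grid}(x_I',m)$ (so in particular $x_I \sim_E x_I'$), and the equality of second components says the same translation carries $x_G(x_I,m)$ onto $x_G(x_I',m)$; together these are exactly condition~\eqref{Eq_Def_Goal_equivalence} with $z = x_I' - x_I$, i.e.\ $P_\textrm{grid}(X, x_I, x_G, m) \sim_S P_\textrm{grid}(X, x_I', x_G, m)$, and hence the two problems are isomorphic. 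Combining the two steps, $\mathcal{P}_\textrm{grid}(X, x_G, m)$ contains at most $2^{|L|}\cdot 2^{|L'|} < \infty$ isomorphism classes.

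Most of the work here is bookkeeping; the one point that genuinely needs care — the crux — is that the grid and the discretized goal set must be pinned down by one and the same translation. This is why the joint invariant $\tau$ is needed: the translation type of the grid alone is already finite by Lemma~\ref{Lemma_finitely_many_grids}, but it forgets where the goal lies relative to the grid. The joint statement goes through only because the goal $x_G$ is a single fixed point, so its displacement from the varying initial state cannot escape the bounded set $\{x-y \mid x,y \in X\}$ and the re-centered goal set stays in a fixed finite lattice region.
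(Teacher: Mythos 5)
Your proof is correct and follows essentially the same route as the paper's: both arguments reduce the count to the finitely many possible configurations of the grid and the discretized goal set relative to the initial state, using compactness of $X$ to confine everything to a finite region of the lattice $2^{-m}(\Z\times\Z)$, and then invoke the translation-plus-identity isomorphism between grid search-equivalent problems. Your packaging of this as a single finite-valued invariant $\tau$ is simply a more explicit rendering of the paper's count over the equivalence classes $[X_\textrm{grid}(x_I,m)]_{\sim_T}$ together with the associated initial-state and goal-set classes.
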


\begin{proof}
According to Lemma~\ref{Lemma_finitely_many_grids}, the set of equivalence classes $[X_\textrm{grid}(x_I, m)]_{\sim_T}$ is finite. Moreover, each representative of such a class has finitely many states. Then clearly there are, for a fixed grid equivalence class $[X_\textrm{grid}(x_I, m)]_{\sim_T}$ only finitely many possible combinations between equivalence classes $[x_I']_E \neq [x_I]_E$ of alternative initial states $x_I' \in X_\textrm{grid}(x_I',m)$
with $X_\textrm{grid}(x_I',m) \sim_T X_\textrm{grid}(x_I,m)$, and the corresponding sets $\big\{ [x]_E \mid x \in x_G(x_I', m)\big\}$ of equivalence classes corresponding to the elements of the associated discretized goal sets. $\square$
\end{proof}


\subsection{A Scale-free Action Sequence}\label{sec:scale}
In this section we construct a universal plan for the family of planning problems $\Pglobal$, defined in Section~\ref{Sec_Finite_Discretizations_for_Planar}.
The problem seems challenging at first, given that there are uncountably many environments $X \subset \R^2$ for which the plan must apply. Our approach is to reduce each planning problem in~$\Pglobal$ to a corresponding robot grid search problem $\Pgrid (\Z \times \Z)$, and show how these can be sampled in an effective way to cover the whole family $\Pglobal$.

Assume the robot has access to a normal sequence $\alpha_1\alpha_2\ldots$ in base $4$, which can be used to decide the direction and step size of the next action. We restrict the step size to the countable set $A := \{2^{-m} \mid m = 0,1,2, \ldots\}$. The step sizes could be chosen either (i) by specifying some rule $\gamma: \N \to A$ for choosing the sizes
independently of the $\alpha_k$, or (ii) by specifying some rule $\xi: \{0,1,2,3\} \to A$ that determines for each $k\in\N$ the next step size $\xi(\alpha_k)$ from the current decimal $\alpha_k$. We present a provably universal plan using approach (i). In Section~\ref{sec:sim} we also experiment with an alternative plan that uses approach (ii), and empirically outperforms the first approach, but we have not proved its universality.

For $w > 0$, define the mapping $L_w: \N \to \N$ recursively by $L_w(1) := 1$, and
\begin{equation} \label{Eq_Def_length_of_step_size_segment}
L_w(n) := \ceil[\big]{w \big(L_w(1) + \ldots + L_w(n-1)\big)}
\end{equation}
for all $n \geq 2$. Each $L_w(n)$ represents the number of steps the robot takes before the next change in step size. The weight $w$ indicates the ratio of how many steps the robot takes using a particular step size relative to how many steps it has taken in total up to that point. We next define the order in which different step sizes will be chosen during the exploration. Define $\beta_n := \max \big\{ k \mid  k(k+1)/2 \leq n \big\}$ and $\varphi(n) := n - \beta_n(\beta_n+1)/2$. For instance, $(\beta_n)_{n=1}^{9} = (1, 1, 2, 2, 2, 3, 3, 3, 3)$ and $\varphi(\beta_1, \ldots, \beta_{9}) = (0, 1, 0, 1, 2, 0, 1, 2, 3)$. The algorithm thus samples different step sizes by starting repeatedly from $1 = 2^0$ and proceeding to smaller and smaller step sizes $2^{-m}$.
Then define $\eta_w(n) := \max\big\{k \mid \sum_{j=1}^k L_w(k) \leq n\big\}$, and let $c: \{1,2,3,4\} \to \{v_\textrm{l}, v_\textrm{r}, v_\textrm{u}, v_\textrm{d}\}$ map the normal number digits to the corresponding unit-length steps. The function $\gamma_{\alpha, w}: \N \to \N$ that gives the complete action sequence, sampling from the normal sequence $\alpha_1 \alpha_2 \alpha_3\ldots$, is then given by
\begin{equation} \label{Eq_Def_Universal_Sequence}
\gamma_{\alpha, w}(n) := 2^{-(\varphi \circ \eta_w)(n)} c(\alpha_n).
\end{equation}
Intuitively, $\eta_w(n)$ gives the $n$:th index in the step size alteration scheme and is converted by $\varphi$ to the appropriate power $m \in \N$.

\subsection{Proof of Universality of the Scale-free Plan} \label{sec:proof_of_universal_scale_free}
Consider some planning problem $P = (X,U,f,x_I,X_G) \in \Pglobal$, so that the environment $X \in \mathcal{C}^\circ(\R^2)$ admits a GCGR, see Definition~\ref{Def_Globally_connected_representation_and_suff_scaling_factor}. Using the algorithm described above, the robot starts to explore the environment along the grid $X_\textrm{grid}(x_I, 1)$ which corresponds to step size $1/2$. Unless it immediately moves to within the required goal radius from the goal state $x_G$, it will switch to the next finer step size $2^{-2}$ and continue. As the robot traverses using finer step sizes and again reverts back to coarser ones, it effectively changes the planning problem from one grid $X_\textrm{grid}(x_I, m)$ to another $X_\textrm{grid}(x_I', m')$, where in general both the grid resolution $m'$ and the initial state $x_I'$ have changed.

However, according to Lemma~\ref{Lemma_finitely_many_grids} there exist for each step size $2^{-m}$ only finitely many different grids $\mathcal{G}(X, x_I, m)$. In addition, since $X$ admits a GCGR, there exists some $\eta(X)$ for which each grid $X_\textrm{grid}(x_I, m)$ with $m \geq \eta(X)$ is connected. From this it follows that for all large enough $m > \eta(X)$, the grid $X_\textrm{grid}(x_I, m)$ intersects the open set $B_r(x_G) \cap \textrm{int}X$. If the robot knew to stop switching the step size once it reached one of these grids, it would eventually reach some $x \in B_r(x_G) \cap X_\textrm{grid}(x_I, m)$ in finite time according to Corollary~\ref{Cor_rich_tries_everything}. Fortunately, it turns out that it is sufficient for the robot to keep indefinitely revisiting some step size $2^{-m}$ with $m \geq \eta(X)$.

\begin{theorem}[\textbf{Universal Planar Search Plan}] \label{Thm_Univ_Planar_Search_Algo}
For every normal sequence $\alpha_1\alpha_2\alpha_3\ldots$ and $w > 0$, the scale-free search plan $\gamma_{\alpha,w}$ in~\eqref{Eq_Def_Universal_Sequence} is universal with respect to the family of planning problems $\Pglobal$.
\end{theorem}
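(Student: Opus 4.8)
The plan is to trace the robot's trajectory under $\gamma_{\alpha,w}$ and show that it is eventually forced into a ``good'' grid infinitely often for long enough stretches to reach the discretized goal. Fix $P = (X,U,f,x_I,x_G) \in \Pglobal$ with sufficient scaling factor $\eta(X)$ and goal radius $r > 0$. First I would record the structural facts that make the argument finite: by Lemma~\ref{Lemma_finitely_many_grids}, for each $m \geq \eta(X)$ the family $\mathcal{G}(X,m)$ consists, up to translation, of finitely many connected grids, each with finitely many states; and by Proposition~\ref{Prop_Finitely_Many_Nonisomorphic_Planning_Probs}, for each such $m$ there are only finitely many non-isomorphic discretized problems $P_\textrm{grid}(X,x_I,x_G,m)$. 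Next I would fix one resolution $m^\ast \geq \eta(X)$ large enough that every grid in $\mathcal{G}(X,m^\ast)$ meets $B_r(x_G)\cap\textrm{int}X$ (possible, as observed just before the theorem, since connectedness forces the grid to approximate $X$ at resolution $2^{-m^\ast}$, and $B_r(x_G)\cap\textrm{int}X$ is a fixed open set). For that finite collection of non-isomorphic problems, Corollary~\ref{Cor_rich_tries_everything} applied to each gives, via the finite maximum, a single uniform bound $N^\ast \in \N$: from \emph{any} state of \emph{any} grid in $\mathcal{G}(X,m^\ast)$, there is an action sequence of length at most $N^\ast$ that reaches the discretized goal set, and moreover the normal sequence $\alpha$ realizes every length-$N^\ast$ word at positions of asymptotic density $4^{-N^\ast} > 0$.

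The second ingredient is to quantify how much ``time at resolution $2^{-m^\ast}$'' the schedule $\gamma_{\alpha,w}$ provides. I would analyze the step-size schedule encoded by $L_w$, $\beta_n$, $\varphi$, $\eta_w$: the recursion $L_w(n) = \lceil w(L_w(1)+\cdots+L_w(n-1))\rceil$ makes the segment lengths grow geometrically like $(1+w)^n$, so the $n$-th segment has length $\Theta((1+w)^n)$ and comprises a fixed proportion $\sim w/(1+w)$ of the total steps so far. Since the triangular-number indexing $\varphi\circ\eta_w$ visits the value $m^\ast$ infinitely often — specifically, $m^\ast$ recurs once in each ``diagonal'' of length $\beta$ once $\beta \geq m^\ast$ — the robot spends, on an infinite sequence of disjoint time windows, runs of $2^{-m^\ast}$-steps whose lengths tend to infinity. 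In particular, infinitely many such windows have length $\geq N^\ast$, and in fact (by the density statement for $\alpha$) infinitely many windows of length $\geq N^\ast$ contain an occurrence of a winning length-$N^\ast$ word at an appropriate offset. This is the step I expect to be the main obstacle: one must be careful that when the robot \emph{re-enters} resolution $2^{-m^\ast}$ after excursions through other (coarser or finer) resolutions, it does so at \emph{some} state of \emph{some} grid in $\mathcal{G}(X,m^\ast)$ — which is exactly what Lemma~\ref{Lemma_finitely_many_grids} guarantees, the current real position always lying on $x_I' + 2^{-m^\ast}(\Z\times\Z)$ for the effective initial state $x_I'$ it happens to be at — so that the uniform bound $N^\ast$ genuinely applies regardless of the interleaving history.

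Combining the two ingredients finishes the argument. Pick the first time window $W$ at resolution $2^{-m^\ast}$ with $|W| \geq N^\ast$ that additionally carries a winning word; at its start the robot occupies some state $x'$ of some connected grid $G \in \mathcal{G}(X,m^\ast)$, and within the next $N^\ast$ steps — all of size $2^{-m^\ast}$ and all drawn consecutively from $\alpha$ by construction of $\gamma_{\alpha,w}$ on that window — the digits $c(\alpha_{n}),\ldots,c(\alpha_{n+N^\ast-1})$ spell an action sequence steering $x'$ into $x_G(x_I',m^\ast) \subset B_r(x_G)\cap\textrm{int}X$. Hence some $x_k$ lies in $B_r(x_G)\cap\textrm{int}X = X_G$, so the plan is successful for $P$. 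Since $P \in \Pglobal$ was arbitrary, $\gamma_{\alpha,w}$ is universal with respect to $\Pglobal$. I would close by remarking that the only non-elementary inputs are the finiteness statements (Lemmas and Propositions of Section~\ref{Sec_Finite_Discretizations_for_Planar}) and Compton's Corollary~\ref{Cor_rich_tries_everything}; everything else is bookkeeping on the schedule $L_w$, and one should double-check the edge cases where $\lceil\cdot\rceil$ rounding makes early segments short, which only affects finitely many windows and is therefore harmless. $\square$
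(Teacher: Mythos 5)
Your architecture matches the paper's: reduce to the finitely many non-isomorphic discretized problems at a fixed resolution $m^\ast \geq \eta(X)$, observe that the schedule revisits step size $2^{-m^\ast}$ in windows of unboundedly growing length, and locate inside one such window an action word that reaches the discretized goal. However, two steps are genuinely missing. First, you never construct a \emph{single} finite word that reaches the goal from \emph{every} state of \emph{every} grid in $\mathcal{G}(X,m^\ast)$; you only extract a uniform length bound $N^\ast$ on state-dependent winning words. That is not enough: which length-$N^\ast$ word is ``winning'' at a given position of a window depends on the state the robot occupies at that position, which depends on the entire interleaved history, so the phrase ``a window carries a winning word at an appropriate offset'' is circular as stated. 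The paper resolves this by concatenating, over the $M$ grid-search-equivalence classes from Proposition~\ref{Prop_Finitely_Many_Nonisomorphic_Planning_Probs}, words $\tilde{u}^{(k)}$ that (by Corollary~\ref{Cor_rich_tries_everything}) succeed from every admissible initial state of class $k$; the resulting $\histu(m^\ast) := \tilde{u}^{(1)}\cat\cdots\cat\tilde{u}^{(M)}$ succeeds no matter where the robot is when it begins executing it, which decouples the word from the state. You cite all the ingredients needed for this step, but the step itself is absent.

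Second, your claim that infinitely many resolution-$m^\ast$ windows contain an occurrence of the needed word is attributed to ``the density statement for $\alpha$'', i.e.\ to normality of $\alpha$ itself. That does not suffice: a word occurring with density $4^{-N^\ast}$ in $\alpha$ can in principle have all of its occurrences outside a prescribed position set of positive lower density, or straddling the boundaries of its blocks. This is precisely why the paper introduces completely deterministic selection rules (Lemma~\ref{Lemma_Step_Sizes_Comp_Deterministc}) and invokes the Kamae--Weiss theorem (Theorem~\ref{Thm_Kamae_Weiss}) to conclude that the subsequence of $\alpha$ restricted to the $2^{-m^\ast}$ positions is itself normal, and then closes with a separate density argument: since the contiguous block lengths grow without bound, if every occurrence of $\histu(m^\ast)$ in that subsequence lay within $l(m^\ast)$ of the end of a block, the occurrences would have lower density zero, contradicting normality of the subsequence; hence some occurrence sits entirely inside one block and is executed uninterrupted at the correct step size. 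Without the normality-preservation machinery and this straddling argument, your proof does not close; the rest of your bookkeeping on $L_w$, $\beta_n$, $\varphi$, and $\eta_w$ is consistent with the paper.
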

Before proving the theorem itself, we establish a final auxiliary result. It states that subsequences $(\alpha_{n_k})_{k=1}^\infty$, associated with robot actions $\gamma_{\alpha,w}(n_k)$, 
which take place within some set $[P]$ of goal-equivalent discretized planning problems $P \in \mathcal{P}_\textrm{grid}(X, x_G, m)$ are themselves normal sequences. For this, we need to introduce the concept of a \emph{completely deterministic sequence}~\cite{BerDowVan22,BerVan19,Wei99}. The combinatorial definition below has been shown to be equivalent to the measure theoretic definition~\cite[Definition 8.8]{Wei99}. The latter was used in~\cite{Kam73} to establish the full characterization of selection rules that preserve normality. Below, the \emph{indicator sequence} $(\omega_n)_{n=1}^\infty \in \{0,1\}^\N$ of a sequence $(n_k)_{k=1}^\infty \subset \N$ is defined by setting $\omega_n = 1$ if and only if $n = n_k$ for some $k \in \N$.

\begin{definition}[\textbf{Completely deterministic sequence (Weiss)}] \label{Def_Completely_Deterministic}
A binary sequence (selection rule) $(\omega_n)_{n=1}^\infty$
is \emph{completely deterministic} if and only if for any $\varepsilon > 0$ there is some $K$ such that after removing from $(\omega_n)_{n=1}^\infty$ a subset of density less than $\varepsilon$, what is left can be covered by a collection $\mathcal{C}$ of $K$-blocks such that $|\mathcal{C}| < 2^{\varepsilon K}$. An increasing sequence $(n_k)_{k=1}^\infty$ of positive integers is completely deterministic if its indicator sequence $(\omega_n)_{n=1}^\infty$ is completely deterministic.
\end{definition}
Intuitively, in a completely deterministic sequence, only a small collection $\mathcal{C}$ of strings (K-blocks), all with a fixed length $K$, appear very frequently, and all other strings extremely rarely, as calibrated freely with the parameter $\varepsilon > 0$.

\begin{lemma}[\textbf{Step size subsequences are completely deterministic}] \label{Lemma_Step_Sizes_Comp_Deterministc}
For each $m \in \N$ and $w > 0$, let $(n_k)_{k=1}^\infty$ be the increasing sequence of stages for which $(\varphi \circ \eta_w)(n_k) = m$, i.e.~the robot uses the step size $2^{-m}$ precisely at stages $n_k$. Then
$(n_k)_{k=1}^\infty$ is a completely deterministic sequence.
\end{lemma}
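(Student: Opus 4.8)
Let me understand the structure of $(n_k)$. The step-size schedule works in "rounds": we run through step-size indices according to the $\varphi$-pattern $(0),(0,1),(0,1,2),(0,1,2,3),\dots$, where the block of the schedule at position $n$ in this index sequence has length $L_w(n)$. So the sequence of stages is partitioned into consecutive maximal intervals $I_1,I_2,I_3,\dots$, where $I_n$ has length $L_w(n)$ and is "labelled" by the step-size power $\varphi(n)$. The target set $(n_k)$ is exactly the union of those $I_n$ with $\varphi(n)=m$. So the indicator sequence $(\omega_j)$ is a concatenation of blocks of $0$'s and $1$'s: block $n$ is all-$1$ if $\varphi(n)=m$ and all-$0$ otherwise, and the block lengths are $L_w(n)$. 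Since each fixed value $m$ occurs among the $\varphi(n)$ with increasing gaps (roughly, $\varphi(n)=m$ only when $n$ sits in the $(m{+}1)$st through larger triangular rows), the $1$-blocks become sparse relative to the whole, but the key quantitative fact I need is the growth rate of $L_w(n)$.

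The crucial estimate is that $L_w$ grows geometrically: from $L_w(n)=\lceil w(L_w(1)+\dots+L_w(n-1))\rceil$ one gets $S_n := \sum_{j\le n} L_w(j)$ satisfies $S_n \le (1+w)S_{n-1}+1$, hence $S_n = \Theta((1+w)^n)$, and $L_w(n) = \Theta((1+w)^n)$ as well. Consequently the block boundaries in the indicator sequence occur at positions that grow geometrically, so there are only $O(\log N)$ blocks within the first $N$ symbols. This is exactly the sort of "low complexity" structure that makes a sequence completely deterministic: a sequence that is constant on each of $O(\log N)$ intervals in $[1,N]$ has, for any window length $K$, at most $O(\log N)$ distinct $K$-windows that straddle a block boundary, while all other $K$-windows are the all-$0$ or all-$1$ string. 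I would make this precise as follows. Fix $\varepsilon>0$ and $K$. Call a position $j\le N$ \emph{bad} if the $K$-block starting at $j$ meets a block boundary of the indicator sequence; the number of bad positions among $[1,N]$ is at most $K$ times the number of boundaries, i.e.\ $O(K\log N)$, which has density $\to 0$ as $N\to\infty$. After removing the bad positions, every remaining $K$-block is either $0^K$ or $1^K$, so the covering collection $\mathcal C$ has $|\mathcal C|\le 2 < 2^{\varepsilon K}$ once $K\ge 2/\varepsilon$. Thus, choosing $K$ large enough (depending only on $\varepsilon$) and then noting the bad set has density $<\varepsilon$ for all large $N$, Definition~\ref{Def_Completely_Deterministic} is satisfied.

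I should be a little careful about how Definition~\ref{Def_Completely_Deterministic} quantifies density — it asks for a subset of density less than $\varepsilon$ whose removal leaves something coverable by $<2^{\varepsilon K}$ many $K$-blocks, with $K$ allowed to depend on $\varepsilon$. Since $K$ is fixed first and then the bad-position density is $\limsup_N (\#\text{bad}\le N)/N = 0$ (because $\#\text{boundaries}\le N = O(\log N)$), both requirements are met simultaneously; in fact one gets density $0$, which is stronger than $<\varepsilon$. The main obstacle is purely the geometric-growth estimate for $L_w(n)$ and the bookkeeping that translates "finitely/logarithmically many blocks" into the block-covering bound of the definition; everything else is essentially a counting argument. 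A secondary point to state cleanly is that the indicator sequence $(\omega_j)$ lives on the index set of \emph{all} stages (it is defined on $\N$), so the blocks $I_n$ really do tile $\N$ and the boundary count up to $N$ is $\eta_w(N)+O(1) = O(\log N)$, using the definition of $\eta_w$ and the geometric growth of the partial sums $S_n$.
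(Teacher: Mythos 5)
Your proposal is correct and follows essentially the same route as the paper: both arguments rest on the observation that the indicator sequence of $(n_k)$ is a concatenation of ever-longer constant blocks of $0$s and $1$s, so that for fixed $K$ only very few length-$K$ windows occur outside a negligible set. The only difference is bookkeeping: the paper keeps all $2K$ ``single-transition'' windows in $\mathcal{C}$ and chooses $K$ with $2K < 2^{\varepsilon K}$, removing only a finite prefix, whereas you remove the density-zero set of windows straddling block boundaries (justified by your geometric-growth estimate $L_w(n) = \Theta((1+w)^n)$, which the paper leaves implicit) and get away with $|\mathcal{C}| \le 2$.
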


\begin{proof}
Let $(\omega_n)_{n=1}^\infty$ be the indicator sequence of $(n_k)_{k=1}^\infty$.
It follows from the definition of the sets $L_w(n)$ in~\eqref{Eq_Def_length_of_step_size_segment} that asymptotically the sequence $(\omega_n)_{n=1}^\infty$ contains longer and longer streaks of $0$s, separated by longer and longer streaks of $1$s. Thus, for a fixed $K \in \N$, the only $K$-blocks that appear infinitely often in $(\omega_n)_{n=1}^\infty$ consist either of a streak of $0s$ of length $K-j$ followed by a streak of $1$s of length $j$, or vice versa, where $j$ runs through $0, \ldots, K$.
For $K = 3$ the possible sequences are $\{000, 001, 011, 111, 110, 100\}$. The number of $K$-blocks that appear infinitely many times in $(\omega_n)_{n=1}^\infty$ is thus $2K$ for any $K$.

Let then $\varepsilon > 0$ and choose $K$ to be such that $(1 + \log_2 K)/K < \varepsilon$, from which it follows that $2K < 2^{\varepsilon K}$. Now, let $\mathcal{C}$ be the set of all length-$K$ sequences that appear in $(\omega_n)_{n=1}^\infty$ infinitely many times. From the above reasoning, $|\mathcal{C}| = 2K$. According to~\eqref{Eq_Def_length_of_step_size_segment}, the length of the alternating blocks of $1$s and $0$s in $(\omega_n)_{n=1}^\infty$ will exceed $K$ from index $n = K/w$ onwards. After this, the entire sequence can be covered using only sequences in $\mathcal{C}$. Since the finite initial segment before $n = K/w$ has asymptotic density zero, the result follows. $\square$
\end{proof}

A sequence $(n_k)_{k=1}^\infty \subset \N$ \emph{preserves normality}, if for every normal sequence $(\alpha_n)_{n=1}^\infty$, the subsequence $(\alpha_{n_k})_{k=1}^\infty$ is also normal. Before presenting the proof of Theorem~\ref{Thm_Univ_Planar_Search_Algo}, we restate the central characterization result due to Kamae~\cite{Kam73} and Weiss~\cite{Wei71} as given in~\cite{BerVan19}.

\begin{theorem}[\textbf{Admissible selection rules (Kamae, Weiss)}] \label{Thm_Kamae_Weiss}
An increasing sequence of positive integers $(n_k)_{k=1}^\infty$ preserves normality if and only if it is a completely deterministic sequence of positive lower asymptotic density.
\end{theorem}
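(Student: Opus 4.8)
The plan is to recast the statement in the language of symbolic dynamics and ergodic theory, where the Kamae–Weiss characterization becomes an assertion about joinings. I would work in the full shift $\Omega = \{0,\ldots,b-1\}^{\N}$ with the shift map $T$, and use that a sequence is normal precisely when it is \emph{generic} for the uniform Bernoulli measure $\mu = (1/b,\ldots,1/b)^{\otimes\N}$, i.e.\ its empirical measures $\frac1N\sum_{n<N}\delta_{T^n\alpha}$ converge weak-$*$ to $\mu$. Dually, by the equivalence invoked after Definition~\ref{Def_Completely_Deterministic}, an increasing sequence $(n_k)$ with indicator $(\omega_n)$ is completely deterministic exactly when every invariant measure arising as a weak-$*$ limit of the empirical measures of $\omega$ (its Furstenberg system) has zero entropy. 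The whole theorem then reduces to controlling the joint empirical behaviour of the coupled sequence $(\omega_n,\alpha_n)$, and in particular the conditional distribution of $\alpha$ along the selected set $\{n:\omega_n=1\}$.

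For sufficiency I would assume $(\omega_n)$ is completely deterministic with positive lower density $\underline{d}>0$ and pass to the induced (first-return) system on the selected positions. Positive lower density guarantees, via a Kac-type bound, that return times are integrable on average, so that reading $\alpha$ along $(n_k)$ is well defined and its block statistics are governed by an induced measure. The key step is a disjointness argument: every weak-$*$ limit of the empirical measures of $(\omega,\alpha)$ is a joining of a zero-entropy measure, drawn from the Furstenberg system of $\omega$, with the Bernoulli measure $\mu$ coming from $\alpha$. By Furstenberg's theorem that $K$-systems---in particular Bernoulli systems---are disjoint from zero-entropy systems, every such joining is the product. Hence the $\alpha$-coordinate stays uniformly distributed even after conditioning on the positive-frequency event $\{\omega=1\}$, and, after upgrading this to blocks through the higher-order/induced shift, the selected subsequence $(\alpha_{n_k})$ has uniform block frequencies, i.e.\ is normal.

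For necessity I would treat the two conditions separately. If the lower density vanishes, I would exhibit a normal $\alpha$ whose selected subsequence is biased: along a rapidly increasing sequence of scales on which the selected proportion tends to $0$, one prescribes the symbols at the (relatively sparse) selected positions so as to force a persistent excess of zeros in the subsequence prefixes, while those prescribed positions form a density-zero perturbation at each such scale and so leave $\alpha$ itself normal. If instead $(\omega_n)$ is not completely deterministic, its Furstenberg system carries positive entropy, and the failure of disjointness is used in the opposite direction: positive entropy lets the selection pattern predict a nontrivial function of a companion Bernoulli source, so one can build a normal $\alpha$ correlated with $(\omega_n)$ whose selected symbols are skewed away from uniform. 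This is exactly where the Kamae–Weiss construction manufactures, from any positive-entropy deterministic selection, a normal sequence admitting a non-normal subsequence.

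I expect the main obstacle to be the necessity direction for complete determinism, together with the block (rather than single-symbol) upgrade in the sufficiency proof. The single-symbol disjointness statement is clean, but normality concerns all block lengths $\ell$, which forces one to run the joining argument on the $\ell$-block presentation while simultaneously tracking the induced transformation; ensuring that the zero-entropy property survives to these higher presentations is the delicate point. On the necessity side, converting ``positive entropy'' into an explicit normal sequence whose subsequence is provably non-normal---a quantitative realization of the failure of disjointness---is the technical heart of the theorem and the step I would allocate the most care to.
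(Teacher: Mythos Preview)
The paper does not prove this theorem at all: it is stated as a quoted result (``the central characterization result due to Kamae and Weiss'') and cited to the original sources and to the survey~\cite{BerVan19}. There is therefore no in-paper proof to compare your proposal against.

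For what it is worth, your outline is the standard ergodic-theoretic route to the Kamae--Weiss theorem: normality as genericity for the Bernoulli measure, complete determinism as zero entropy of the Furstenberg system, sufficiency via Furstenberg disjointness of Bernoulli from zero-entropy systems, and necessity by constructing a correlated normal sequence from a positive-entropy selection rule. That is essentially how the cited references prove it, so your plan is faithful to the literature even if the present paper simply invokes the result as a black box.
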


Equipped with Theorem~\ref{Thm_Kamae_Weiss}, we are set to prove our main result, Theorem~\ref{Thm_Univ_Planar_Search_Algo}.

\begin{proof}[Theorem~\ref{Thm_Univ_Planar_Search_Algo}]
We first show that given a particular planning problem $P(X, U, f, x_I, x_G) \in \Pglobal$, we can associate with every $m \geq \eta(X)$ a finite sequence $\histu(m)$ that is universal with respect to the family $\mathcal{P}_\textrm{grid}(X, x_G, m)$ of discretized planning problems. Let $m \geq \eta(X)$. Proposition~\ref{Prop_Finitely_Many_Nonisomorphic_Planning_Probs} states that there are only finitely many, say $M$, equivalence classes $[P_k]_{\sim_S} \in \mathcal{P}_\textrm{grid}(X, x_G, m) / \sim_S$. Within a class, a plan successful for $P_k$ is successful for all $P \sim_S P_k$. According to Corollary~\ref{Cor_rich_tries_everything}, we may choose for each $[P_k]_{\sim_S}$ some plan $\tilde{u}^{(k)} = \big(u_1^{(k)}, \ldots, u_{n_k}^{(k)}\big)$ which succeeds from any initial state $x_I' \in \textrm{int}X$ satisfying $P_\textrm{grid}(X, x_I', x_G, m) \sim_S P_k$. Hence, $\tilde{u}(m) := \tilde{u}^{(1)}\cat \cdots \cat \tilde{u}^{(M)}$ is successful for any $P \in \mathcal{P}_\textrm{grid}(X, x_G, m)$.

We now show that for all $m \geq \eta(X)$, 
the robot will eventually execute the plan $\histu(m)$ using only steps of size $2^{-m}$. According to Lemma~\ref{Lemma_Step_Sizes_Comp_Deterministc}, the subsequence $(n_k)_{n=1}^\infty$ defined by $(\varphi \circ \eta_w)(n_k) = m$ is completely deterministic, and it has lower asymptotic density $w$ by definition. This, together with Theorem~\ref{Thm_Kamae_Weiss} implies that the corresponding subsequence $(\alpha_{n_k})_{k=1}^\infty$ is normal. This in turn implies that $\histu(m)$ appears in the sequence $(c(\alpha_{n_k}))_{k=1}^\infty$ with relative frequency equal to all other blocks of the same length. Assume then, contrary to the claim, that the robot never executes the whole sequence $\histu(m) = \big(u_1, \ldots, u_{l(m)}\big)$ while taking steps of size $2^{-m}$. From~\eqref{Eq_Def_length_of_step_size_segment} it follows that the lengths of continuous blocks in the sequence $(n_k)_{k=1}^\infty$ monotonically increase without bounds, implying that the indices $k$ for which $\histu(m) = \big(c(\alpha_k), \ldots, c(\alpha_{k + l(m)})\big)$ must always lie within $l(m)$ from the end of such blocks. But this implies that the lower asymptotic density of the sequence $\histu(m)$ is zero, which is a contradiction. $\square$
\end{proof}

\begin{figure}[tb]
\begin{tabular}{cc}
\includegraphics[width=6cm]{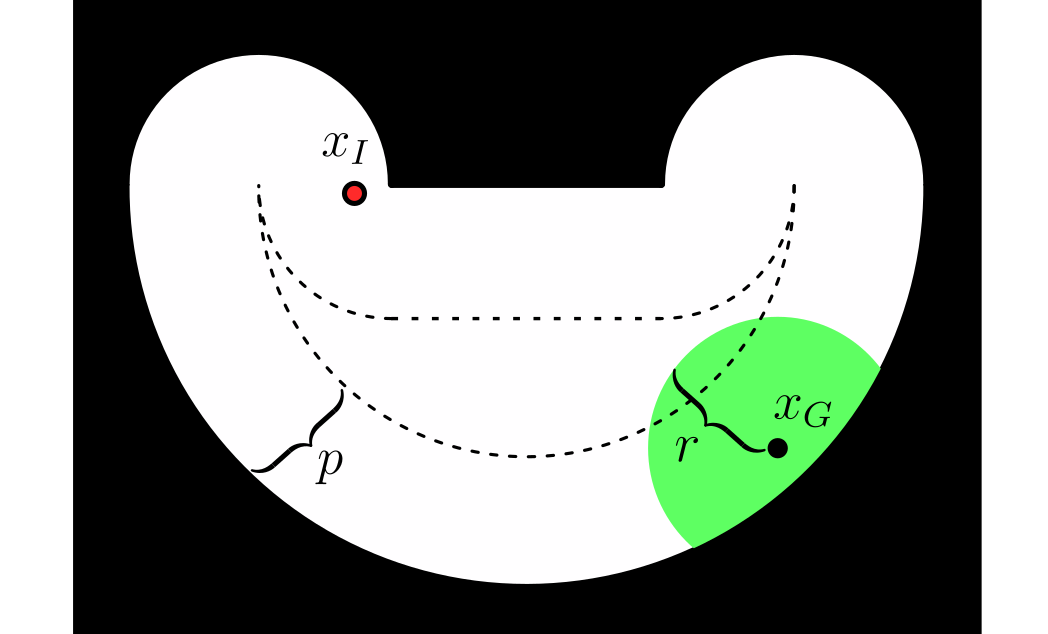} &
\includegraphics[width=6cm]{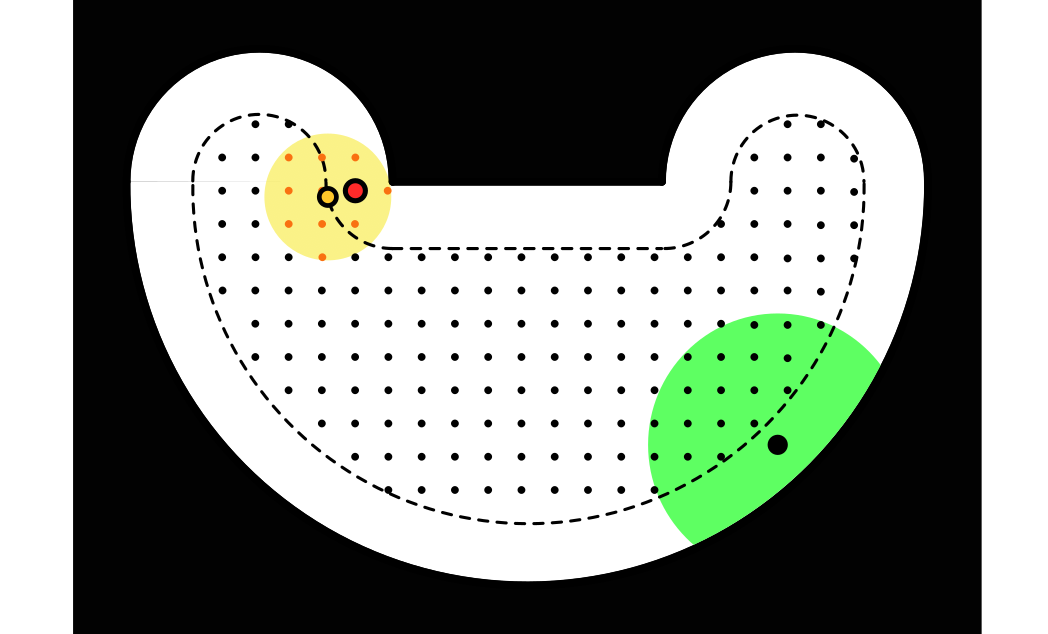} \\
a. & b.
\end{tabular}
\caption{\label{fig:positive_reach} Determining the sufficient scaling factor $\eta(X,r)$ in an environment whose complement is a set with positive reach. (a) Radius $r$ of the goal area $B_r(x_G)$, and $p = \textrm{reach}(\R^2 \setminus X)$. (b)
The grid $V(X, x_I, m) := X_\textrm{grid}(x_I, m) \cap X_{p/2}$ in which $m = p/4$. Note that $x_I \notin V(X, x_I, m)$, but $x_I \in B_{p/2}(z_p)$ in which $z_p$ (\includegraphics[width=0.22cm]{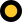}) minimizes the distance from  $x_I$ to the set $X_{p/2}$.
} 
\end{figure}

\subsection{Characterizing Environments that Admit GCGR}
Here we provide a simple sufficient property which guarantees that a given environment $X$ admits a GCGR, see Definition~\ref{Def_Globally_connected_representation_and_suff_scaling_factor}.
These are sets $X \in \R^2$ whose complement $\mathbb{R}^2 \setminus X$ is a set with \emph{positive reach}~\cite{RatZah2019}. Essentially, a set has positive reach if
a ball with some fixed radius $r$ can be rolled on the boundary $\partial E$ so that every point $x \in \partial E$ is eventually touched by the ball. The boundary $\partial X$ of such an environment could be a smooth curve with bounded curvature, or a union of polygons with no wedges. See Figure 3 in supplementary material.

\begin{proposition}[\textbf{Sufficient scaling factor independent of start state}] \label{Prop_sufficient_independent_scaling_exists}
Let $X \subset \mathbb{R}^2$ be a compact, connected environment whose complement is a set with positive reach, and let $x_G \in X$ be a goal state. Then, for every goal radius $r>0$ there exists a sufficient scaling factor $\eta(X, r) > 0$ such that for all $m > \eta(X, r)$ and all initial states $x_I \in \mathrm{int} X$, the grid $X_\textrm{grid}(x_I, m) := \mathrm{int}X \cap \big(x_I + 2^{-m} (\mathbb{Z} \times \mathbb{Z})\big)$ is connected and intersects the ball $B_r(x_G)$.
\end{proposition}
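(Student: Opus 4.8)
Write $p:=\mathrm{reach}(\R^2\setminus X)>0$ and, for $x\in X$, $d(x):=\mathrm{dist}(x,\partial X)$, and let $X_\rho:=\{x\in X: d(x)\ge\rho\}$ be the $\rho$-erosion of $X$. I will use the (mild, and here implicitly intended) regularity that $\mathrm{int}X$ is connected and $X=\overline{\mathrm{int}X}$; some such assumption is genuinely needed, since e.g.\ two externally tangent closed disks have a positive-reach complement but a disconnected interior, for which no uniform scaling factor exists. The overall plan is to combine a continuous analysis of $X$ near its boundary with a discrete ``walk-inwards'' argument on the grid $\Lambda:=x_I+2^{-m}(\Z\times\Z)$, taking the spacing $h:=2^{-m}$ small relative to $p$ and to $r$; crucially, all the resulting thresholds on $m$ will be uniform in $x_I$.

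The first step is to extract from positive reach of the complement the geometry we need (all standard, see~\cite{RatZah2019}). (i)~\emph{Inner ball condition}: every $z\in\partial X$ has an inward unit normal $\nu_z$ with the open ball $B_p(c_z)\subset\mathrm{int}X$, where $c_z:=z+p\,\nu_z$. (ii)~On the collar $\{0<d<p\}$ the nearest-point projection $\pi$ onto $\partial X$ is single-valued and continuous, $d$ is differentiable with unit gradient along $\nu_{\pi(\cdot)}$, and the integral curves of $\nabla d$ are exactly the inward normal segments; flowing $\mathrm{int}X$ along these up to level $\rho$ deformation-retracts $\mathrm{int}X$ onto $X_\rho$, so each $X_\rho$ with $0<\rho<p$ is compact and path-connected. (iii)~Every boundary point of the erosion $X_{p/2}$ has the form $z+\tfrac p2\nu_z$ and lies on $\overline{B_{p/2}(c_z)}\subset X_{p/2}$, so $X_{p/2}$ itself satisfies an interior ball condition of radius $p/2$. (iv)~Every $x\in\mathrm{int}X$ is within distance $p/2$ of $X_{p/2}$ (clear if $d(x)\ge p/2$; otherwise $x=\pi(x)+d(x)\nu_{\pi(x)}$ while $\pi(x)+\tfrac p2\nu_{\pi(x)}\in X_{p/2}$).

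The proof then splits into three claims. \emph{Claim A}: $V:=X_{p/2}\cap\Lambda$ is non-empty and grid-connected once $h<p/(2\sqrt2)$. \emph{Claim B}: every $v\in X_\textrm{grid}(x_I,m)=\mathrm{int}X\cap\Lambda$ is joined to $V$ by a grid path. \emph{Claim C}: $X_\textrm{grid}(x_I,m)\cap B_r(x_G)\neq\es$ once $h$ is small enough. I expect \textbf{Claim~A to be the main obstacle}: it is the purely combinatorial assertion that a compact, path-connected planar set with a uniform interior ball condition of radius $\rho$, when intersected with \emph{any} axis-aligned grid of spacing $h<\rho/\sqrt2$, has grid-connected vertex set. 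My plan for it is to join two given grid vertices by a path inside $X_{p/2}$ and ``grid-ify'' that path, using that a radius-$p/2$ ball sits inside $X_{p/2}$ near every point of the path to step between neighbouring lattice points without leaving $\mathrm{int}X$; non-emptiness of $V$ is immediate since $X_{p/2}$ contains a ball of radius $p/2>h\sqrt2$.

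For Claim~B, if $v\in X_{p/2}$ it lies in $V$; otherwise $d(v)<p/2<p$, and setting $z:=\pi(v)$, $c:=c_z$ we have $v\in B_p(c)\subset\mathrm{int}X$ with $|v-c|=p-d(v)$. I would iterate $v_0:=v$ and $v_{k+1}:=v_k+h\,e^{(k)}$, choosing $e^{(k)}\in\{\pm e_1,\pm e_2\}$ to be the axis step most aligned with $c-v_k$, so that $\langle e^{(k)},(c-v_k)/|c-v_k|\rangle\ge1/\sqrt2$; a one-line computation gives $|v_{k+1}-c|^2\le|v_k-c|^2-\sqrt2\,h\,|v_k-c|+h^2$, hence, while $|v_k-c|\ge p/2$ and $h<p/(2\sqrt2)$, the numbers $|v_k-c|^2$ strictly decrease by a fixed positive amount and each $v_k$ stays in $B_p(c)\subset\mathrm{int}X$. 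After finitely many steps $|v_K-c|<p/2$, so $d(v_K)\ge p-|v_K-c|>p/2$ and $v_K\in V$, giving the desired grid path $v_0,\dots,v_K$. Claim~C is easy: since $x_G\in X=\overline{\mathrm{int}X}$, the open set $B_r(x_G)\cap\mathrm{int}X$ is non-empty and contains a ball $B_{\rho_1}(y_1)$ with $\rho_1=\rho_1(X,x_G,r)>0$ independent of $x_I$, and every translate of $2^{-m}(\Z\times\Z)$ meets $B_{\rho_1}(y_1)$ as soon as $2^{-m}<\rho_1/\sqrt2$. Finally, Claims~A and~B make $X_\textrm{grid}(x_I,m)$ grid-connected for $2^{-m}<p/(2\sqrt2)$, Claim~C adds the goal intersection for $2^{-m}<\rho_1/\sqrt2$, and since all thresholds are uniform in $x_I$, any integer $\eta(X,r)>\max\{\log_2(2\sqrt2/p),\,\log_2(\sqrt2/\rho_1)\}$ works.
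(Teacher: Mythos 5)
Your proposal follows essentially the same route as the paper's proof: set $p=\mathrm{reach}(\R^2\setminus X)$, erode to $X_{p/2}$, argue that the grid restricted to this core is connected, and then attach every remaining grid point of $\mathrm{int}X$ to the core using the interior-ball geometry supplied by positive reach, with the goal-intersection handled by fitting a ball of radius comparable to $\min\{p,r\}$ inside $B_r(x_G)\cap\mathrm{int}X$. The differences are matters of execution. For the attachment step the paper takes $z_p$ to be the nearest point of $\overline{X_{p/2}}$ to a grid point $z$, observes $z\in B_{p/2}(z_p)\subset\mathrm{int}X$, and uses connectivity of the lattice inside that single ball, whereas you walk greedily toward the centre $c$ of the inner tangent ball with an explicit decrease of $|v_k-c|^2$; both work, yours is more self-contained and quantitative, the paper's is shorter. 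Your Claim~A --- grid-connectivity of $X_{p/2}\cap\big(x_I+2^{-m}(\Z\times\Z)\big)$ --- is exactly the step the paper disposes of in one sentence (``Since $X$ is connected, it follows that $V(X,x_I,m)$ is a connected grid''), and you are right to single it out as the real work: neither your grid-ification sketch nor the paper's assertion is a complete argument. Your caveat about needing $\mathrm{int}X$ connected is also well taken, since connectedness of $X$ alone does not yield connectedness of the erosion (two externally tangent closed unit disks form a connected compact set whose complement has reach $1$, yet whose $p/2$-erosion is disconnected), so the paper's one-line justification is, strictly speaking, a gap that your added hypothesis makes explicit; note only that in the paper's ``endpoint-only'' transition model such pinched examples can still have connected grids by stepping across the pinch, so the extra hypothesis repairs the proof strategy rather than being provably forced by the statement. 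Finally, your Claim~C supplies the uniform lower bound on a ball inside $B_r(x_G)\cap\mathrm{int}X$ that the paper's choice $2^{-s}=\min\{p,r\}/4$ uses implicitly but never states.
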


\begin{proof}
Let $p := \textrm{reach}(\mathbb{R}^2 \setminus X)$, let $r > 0$ and let $s > 0$ be such that $2^{-s} = \min\{p, r\} / 4$. Define $\eta(X, r) = s$, choose any $s < m \in \N$ and consider the subgrid
\[
V(X, x_I, m) := X_\textrm{grid}(x_I, m) \cap X_{p/2},
\]
in which $X_{p/2} := X \setminus \overline{B_{p/2}(\mathbb{R}^2 \setminus X)}$. This set consists of the grid points that are further than $p/2$ from the boundary $\partial X$. Since $X$ is connected, it follows that $V(X, x_I, m)$ is a connected grid, but it might be the case that $x_I \notin V(X, x_I, m)$.

We now show that every $z \in X_\textrm{grid}(x_I, m)$ can be connected to $V(X_, x_I, m)$ along a chain of adjacent points in $X_\textrm{grid}(x_I, m)$.
Let $z_p := \textrm{argmin}_{q \in \overline{X_{p/2}}} \textrm{dist}(q, z)$. Since $z_p \in \overline{X_{p/2}}$, it follows that $B_{p/2}(z_p) \subset \textrm{int}X$. Furthermore, $z \in \textrm{int}X$ implies that $z \in B_{p/2}(z_p)$, see Figure~\ref{fig:positive_reach}. Since $2^{-m} < 2^{-s} \leq p/4$, it follows that the subgrid
\[
W_z := B_{p/2}(z_p) \cap \big(x_I + 2^{-m} (\mathbb{Z} \times \mathbb{Z})\big)
\]
is connected and satisfies $z \in W_z$ and $W_z \cap V(X, x_I, m) \neq \varnothing$, as required. $\square$
\end{proof}

\section{Learning Optimal Plans}

Clearly, action sequences generated from universal plans are far from optimal.  In this section, we briefly sketch how executing a universal plan can nevertheless produce optimal plans.  We would like the robot to record the shortest possible sequence of actions that brings it from the initial state $x_I$ to the goal set $X_G$. We expect the following to generalize to optimality for any stage-additive cost model with minor modifications.

Suppose the robot can detect initial and goal states.  Let $h_\textrm{init} : X \rightarrow \{0,1\}$ be an {\em initial-state detector}, which yields $h(x) = 1$ if and only if 
$x = x_I$.
Similarly, let $h_\textrm{goal} : X \rightarrow \{0,1\}$ be a {\em goal detector} with $h(x) = 1$ if and only if $x \in X_G$.

\begin{proposition}[\textbf{Learning optimal plans}] \label{Prop_optimal_plan} Let $P_X = (X, U, f, x_I, X_G)$ be a robot grid search problem. For a robot with initial and goal state detectors and unlimited memory, an algorithm exists for which any universal plan causes the robot to learn the optimal plan after a finite number of steps.
\end{proposition}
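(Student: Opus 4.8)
The plan is to have the robot disregard the terminating role of its goal detector and simply run the universal action sequence $\histu=(u_1,u_2,\ldots)$ forever, treating $h_\textrm{init}$ and $h_\textrm{goal}$ only as sensors that flag the stages at which the current state equals $x_I$, respectively lies in $X_G$. Alongside the execution the robot maintains a finite buffer $B$ (a list of actions) and a stored candidate $\histu_\textrm{best}$ with recorded length $\ell_\textrm{best}:=|\histu_\textrm{best}|\in\N\cup\{\infty\}$, initialized to $\infty$. At each stage, \emph{before} applying the next action: if $h_\textrm{init}$ fires, empty $B$ and (re)start recording; if $h_\textrm{goal}$ fires while recording and $|B|<\ell_\textrm{best}$, set $\histu_\textrm{best}:=B$ — so $\histu_\textrm{best}$ is the block of actions applied since the most recent firing of $h_\textrm{init}$. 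Then apply the prescribed action and, if recording, append it to $B$. (If $x_I\in X_G$ the empty plan is optimal and gets recorded at stage $1$, so assume $x_I\notin X_G$.)

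Correctness rests on two observations. First, every candidate ever stored is a \emph{feasible} plan from $x_I$ to $X_G$: at the stage where $h_\textrm{init}$ last fired the robot was at $x_I$ by definition of the detector; the recorded $B$ is precisely the contiguous block of universal-plan actions applied from that stage to the stage where $h_\textrm{goal}$ fired; and at that later stage the robot was in $X_G$. Hence $x_I\cat\histu_\textrm{best}\in X_G$, so once $\ell_\textrm{best}$ is finite it is bounded below by the optimal length $L$. Second, the optimal length is attained after finitely many steps. Let $\histu^\ast=(u^\ast_1,\ldots,u^\ast_L)$ be a shortest action sequence with $x_I\cat\histu^\ast\in X_G$. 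Since the discrete universal plans we have constructed are rich plans, Corollary~\ref{Cor_rich_tries_everything} guarantees that, executing $\histu$ from the true initial state, the robot is at $x_I$ and applies exactly the block $\histu^\ast$ immediately afterwards at infinitely many stages; let $N$ be the first such stage. Then $h_\textrm{init}$ fires at stage $N$, $B$ is emptied, and by stage $N+L$ the robot lies in $X_G$ having recorded a block of length at most $L$; by the first observation this block has length exactly $L$ and is optimal. As $\ell_\textrm{best}$ is nonincreasing, $\histu_\textrm{best}$ is an optimal plan from stage $N+L$ onward.

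It remains to note where the hypotheses enter. The two detectors are exactly what let the otherwise sensorless robot delimit the feasible sub-plans embedded in the universal sequence: without $h_\textrm{init}$ it cannot identify the stages at which its state is $x_I$, and without $h_\textrm{goal}$ it cannot recognize success. Unlimited memory is needed because $B$ can grow arbitrarily long before the first firing of $h_\textrm{goal}$, and the stage $N$ above, though finite, admits no a priori bound, so no uniform storage bound exists. The one point to be careful about is the second observation: it is genuinely the ``tries every finite action sequence from every state'' property of rich plans (Corollary~\ref{Cor_rich_tries_everything}), not bare universality, that forces $\histu^\ast$ to appear verbatim right after a visit to $x_I$ — so I would either state the proposition for rich/normal universal plans or make ``visits $x_I$ and executes every finite action sequence from it'' the property actually invoked (for the continuous scale-free plan of Section~\ref{sec:scale}, the analogous statement at a fixed fine resolution plays this role). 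Finally, replacing ``length'' by ``accumulated cost'' in the comparison step gives optimality for any stage-additive cost functional, as anticipated before the proposition.
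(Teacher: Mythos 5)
Your proposal is correct and takes essentially the same route as the paper: the same anytime buffer-recording algorithm (reset on $h_\textrm{init}$, store on $h_\textrm{goal}$ if shorter), with Corollary~\ref{Cor_rich_tries_everything} supplying the guarantee that the optimal block is eventually executed verbatim from $x_I$. Your added caveat --- that the argument really uses richness rather than bare universality --- is a fair observation, but the paper's own proof invokes the same corollary in the same way, so the approaches coincide.
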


\begin{proof}
An anytime algorithm that returns the optimal plan is produced as follows.  It maintains the shortest action sequence that starts and ends at $x_I$ and $X_G$, respectively, that has been encountered so far.  Initially, it is empty.  In another buffer, if $h_\textrm{init} = 1$, then it starts recording the sequence of actions taken until either $h_\textrm{init} = 1$ again, in which case it resets the recording, or $h_\textrm{goal} = 1$.  If the sequence in the buffer is shorter than the shortest one obtained so far, then it is kept as the shortest. Corollary~\ref{Cor_rich_tries_everything} guarantees that every finite-length sequence is visited in the action sequence, from every reachable state. Hence, the optimal plan (finite action subsequence) must be visited at some step. At this point, the robot has recorded the optimal plan. $\square$
\end{proof}

Note that the robot cannot ever report that the optimal plan has been learned.  Only an external observer with complete information would know if the stored best plan so far happens to be optimal.  Furthermore, it still cannot decide whether a problem is solvable.  If not, the robot explores forever in vain. Note also that when an optimal plan exists, the robot does not learn it as a limit of converging approximations, but simply stumbles upon it after a finite time.

\section{Simulations}\label{sec:sim}

\begin{figure}[tb]
\begin{tabular}{ccc}
\includegraphics[width=4cm]{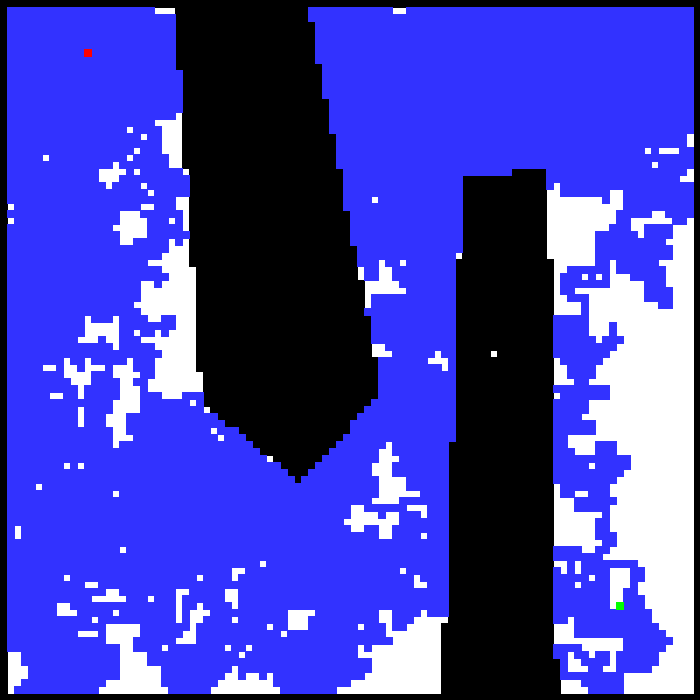} &
\includegraphics[width=4cm]{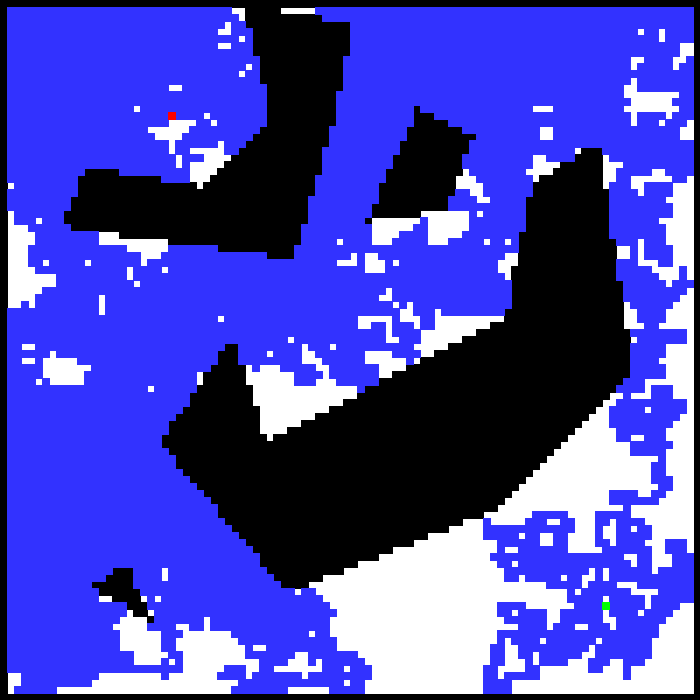} &
\includegraphics[width=4cm]{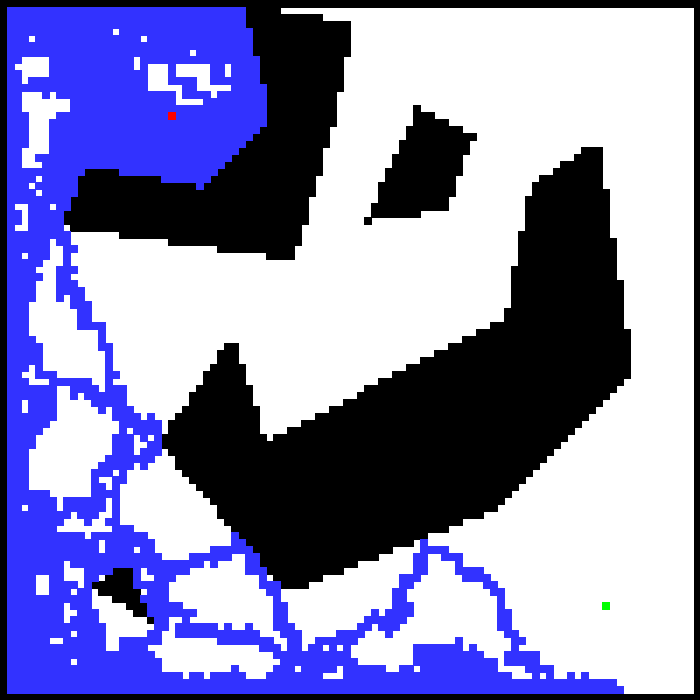} \\
a. & b. & c. \\
\includegraphics[width=4cm]{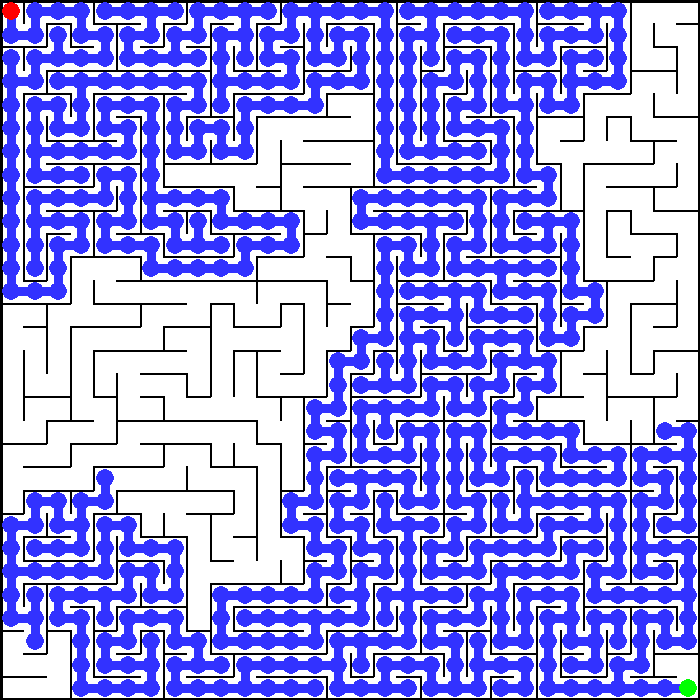} &
\includegraphics[width=4cm]{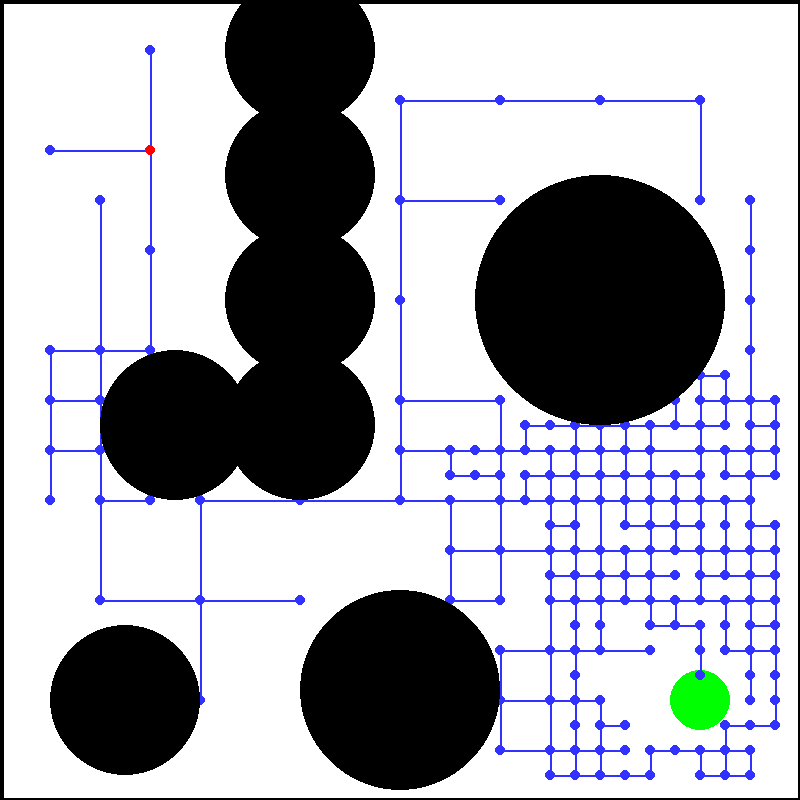} &
\includegraphics[width=4cm]{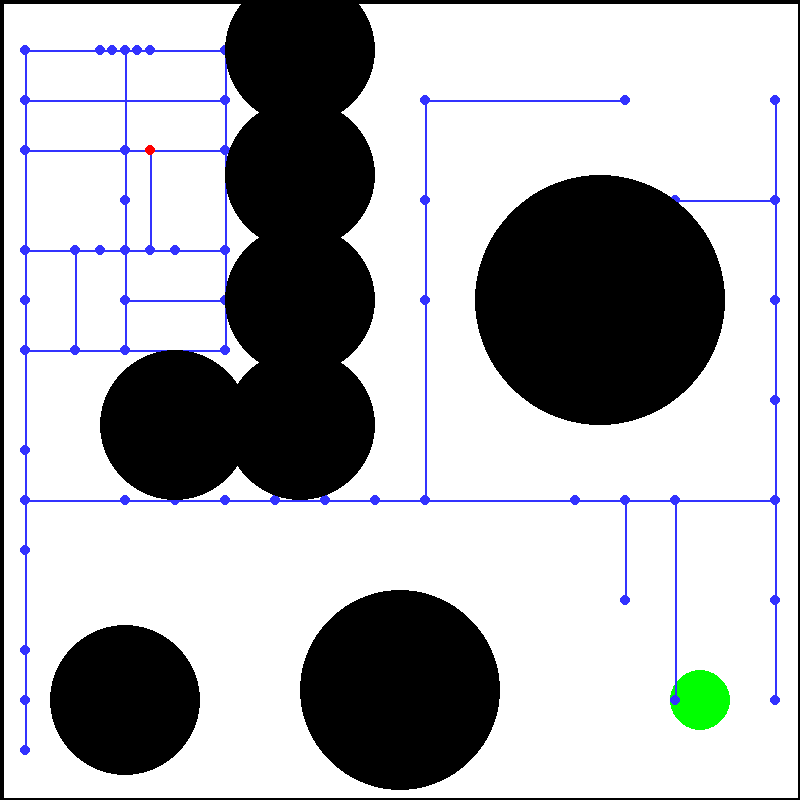} \\
d. & e. & f. 
\end{tabular}
\caption{\label{fig:sim} Typical executions of plans believed to be universal: (a) Digits of $\pi$ applied to search a 100x100 grid; 5548 of 7124 states were visited in 47785 total steps; all examples involve going from a red initial state in the upper left to a green goal state or region in the lower right.  (b) Same plan as in (a); 5220 of 7128 states were visited in 48291 steps.  (c) Applying Champernowne's number results in 11903745 steps; the progress after 50000 steps is shown. (d) Using $\pi$ digits, starting at position 1200000, to search a maze in 213675 steps. (e) Using $\pi$ digits starting at 12000000 to solve a continuous planning problem in 970 steps, using the proven method from Section \ref{sec:scale}.  (f) An alternative method, which seems more efficient, but is not yet proven to be complete; digits of $\pi$ starting with 3400000 were used.}
\end{figure}

We have experimentally investigated the application of universal plans to three kinds of problems:  1) grid world among obstacles, 2) classical mazes, and 3) planar path planning.  Figure \ref{fig:sim} illustrates some results.  All experiments use base-$4$ digits of Champernowne's number ($0123101112132021222330313233100101...$) or $\pi$ ($30210033312222020...$) for universal plans.  Figure \ref{fig:sim}(a)--(c) shows executions for a grid-searching robot.  For the first example, we ran 1000 trials, starting the $\pi$ digit sequence at position $100000(i-1)$ for the $i$th trial.  Large sequence starting indices were needed because trajectories generated from nearby offsets tend to quickly funnel together, decreasing diversity of executions. The average number of steps for the example in Figure \ref{fig:sim}(a) was 194,399, with a min and max of 10,549 and 997,595, respectively.  The average, min, and max for the example in Figure \ref{fig:sim}(b) was 79,448, 5,368, and 463,764, respectively.  Figure \ref{fig:sim}(c) shows the execution from alternatively applying Champernowne's number, which performs much worse than $\pi$.  The average, min, and max after 1000 trials was 107,870.540, 3,745, and 179,099,807, respectively.  We also ran 1000 trials on this example using pseudorandom actions, resulting in average, min, and max of 197,329, 10,495, and 1,220,295, with performance comparable to using $\pi$ digits.  

Figure~\ref{fig:sim}(d) shows a maze-searching study, again using $\pi$ digits with offsets and 1000 trials.  The resulting average, min, and max number of steps were 309,448, 28,460, and 2,419,499.   Figure~\ref{fig:sim}(e) shows a typical result from applying the method of Section \ref{sec:scale} with $w=1$ to a 2D planning problem among disc obstacles, to arrive in a disc goal region.  After 1000 trials using $\pi$ digits, the average, min, and max steps were 1,477.7, 38, and 65,809.  Figure~\ref{fig:sim}(f) illustrates a run of an alternative method, which yields average, min, and max steps 741.7, 36, and 4,166.  The method proceeds as follows.  The initial step size is chosen to be half the environment width.  In each iteration, two $\pi$ digits are used.  One selects the direction, and the other selects whether to halve or double the step size (keeping the initial size as a limit).  Digits 0, 1, and 2 cause it to be doubled, and 3 causes it to be halved, including a bias toward larger steps.  Although the method seems more efficient, we have not proved that it is universal (but conjecture that it is).

We tried various experiments with digits of other everyday irrationals than $\pi$, including $e$, $\ln 2$, $\sqrt 2$, and the golden ratio; their performance is generally comparable to using $\pi$ digits.  However, Champernowne is generally much worse, as may be expected because it is not strongly normal, as defined in~\cite{AraBaiBorBor13}.  Thus, it seems that the naturally occurring irrational numbers that are widely believed to be normal have better performance than the `hand-made' Champernowne number, but the latter is at least proved to be normal~\cite{BaiCra02}.  This results in a performance gap between using numbers proven to be normal versus believed to be normal!

\section{Conclusions}\label{sec:con}
We have thus shown that a planning algorithm may output a fixed plan, regardless of the input problem, and the algorithm will nevertheless be complete in the sense that the plan will bring the robot to the goal if it is possible.  It generally is not, however, able to decide whether it is feasible to reach the goal because no computation is performed over an input representation.  Our results are formulated for discrete grid planning and motion planning in a 2D freespace with regular boundary (positive reach), both of which were confirmed in simulation studies. However, with only slight modifications, the proofs apply to any finite dimensional state space, in both the discrete and continuous case. Several challenging open problems exist, including proving that the method in Figure \ref{fig:sim}(f) is a universal plan, developing improved universal motion plans, universal plans based on any rich (not necessarily normal) sequences, universal plans under differential constraints, universal plans under information feedback, discovering optimal plans in continuous spaces, and tight upper and lower bounds on time and space complexity.  We believe the insights gained in this work will be valuable for machine learning, especially in exploration phases of reinforcement learning, as well as motion planning, error detection and recovery, operating in GPS-denied environments, and system verification.

\subsubsection*{Acknowledgements.} We would like to thank our colleagues Ba\c{s}ak Sak\c{c}ak, Nicoletta Prencipe, Tuomas Orponen, and Vadim Weinstein for many fruitful discussions and suggestions during the process of writing this paper.

\bibliographystyle{plain}
\bibliography{main,more,pub}

\end{document}